  \providecommand\BibTeX{{%
    \normalfont B\kern-0.5em{\scshape i\kern-0.25em b}\kern-0.8em\TeX}}}
\newtheorem{ttheorem}{Theorem}
\newcommand{\framework}{{DetectorGuard}\xspace}
\newcommand{\base}{{Base Detector}\xspace}
\newcommand{\defense}{{Objectness Predictor}\xspace}
\newcommand{\matcher}{{Objectness Explainer}\xspace}
\algnewcommand{\LeftCommenta}[1]{\Statex \hspace{1.3em} \(\triangleright\) #1}
\algnewcommand{\LeftCommentb}[1]{\Statex \hspace{2.8em} \(\triangleright\) #1}
\begin{document}
\fancyhead{}

\title{\framework: Provably Securing Object Detectors\\ against Localized Patch Hiding Attacks}

\author{Chong Xiang}
\affiliation{%
   \institution{Princeton University}
   \city{Princeton}
   \state{NJ}
   \country{USA}}
\email{cxiang@princeton.edu}
\author{Prateek Mittal}
\affiliation{%
   \institution{Princeton University}
   \city{Princeton}
   \state{NJ}
   \country{USA}}
\email{pmittal@princeton.edu}

\begin{abstract}
State-of-the-art object detectors are vulnerable to localized patch hiding attacks, where an adversary introduces a small adversarial patch to make detectors miss the detection of salient objects. The patch attacker can carry out a physical-world attack by printing and attaching an adversarial patch to the victim object; thus, it imposes a challenge for the safe deployment of object detectors. In this paper, we propose \framework as the first general framework for building provably robust object detectors against localized patch hiding attacks. \framework is inspired by recent advancements in robust image classification research; we ask: \textit{can we adapt robust image classifiers for robust object detection?} Unfortunately, due to their task difference, an object detector naively adapted from a robust image classifier 1) may not necessarily be robust in the adversarial setting or 2) even maintain decent performance in the clean setting. To address these two issues and build a high-performance robust object detector, we propose an \textit{objectness explaining strategy}: we adapt a robust image classifier to predict objectness (i.e., the probability of an object being present) for every image location and then explain each objectness using the bounding boxes predicted by a conventional object detector. If all objectness is well explained, we output the predictions made by the conventional object detector; otherwise, we issue an attack alert. Notably, our objectness explaining strategy enables provable robustness for ``free": 1) in the adversarial setting, we formally prove the end-to-end robustness of \framework on certified objects, i.e., it either detects the object or triggers an alert, against \textit{any patch hiding attacker} within our threat model; 2) in the clean setting, we have almost the same performance as state-of-the-art object detectors. Our evaluation on the PASCAL VOC, MS COCO, and KITTI datasets further demonstrates that \framework achieves the first provable robustness against localized patch hiding attacks at a negligible cost (<1\%) of clean performance.

\end{abstract}

\begin{CCSXML}
<ccs2012>
<concept>
<concept_id>10002978.10003022.10003028</concept_id>
<concept_desc>Security and privacy~Domain-specific security and privacy architectures</concept_desc>
<concept_significance>500</concept_significance>
</concept>
<concept>
<concept_id>10010147.10010178.10010224.10010245.10010250</concept_id>
<concept_desc>Computing methodologies~Object detection</concept_desc>
<concept_significance>500</concept_significance>
</concept>
<concept>
<concept_id>10010147.10010257.10010293.10010294</concept_id>
<concept_desc>Computing methodologies~Neural networks</concept_desc>
<concept_significance>500</concept_significance>
</concept>
</ccs2012>
\end{CCSXML}

\ccsdesc[500]{Security and privacy~Domain-specific security and privacy architectures}
\ccsdesc[500]{Computing methodologies~Object detection}
\ccsdesc[500]{Computing methodologies~Neural networks}

\keywords{Provable Robustness; Adversarial Patch Attack; Object Detection}

\maketitle

\section{Introduction}

Localized adversarial patch attacks can induce mispredictions in Machine Learning (ML) systems and have gained significant attention over the past few years~\cite{brown2017adversarial,karmon2018lavan,thys2019fooling,wu2019making,xu2020adversarial}. A patch attacker constrains all adversarial perturbations within a small region so that they can carry out a physical world attack by printing and attaching the adversarial patch to the victim object.\footnote{The patch attack significantly differs from classic $L_p$-norm-bounded adversarial examples~\cite{szegedy2013intriguing,goodfellow2014explaining,carlini2017towards} that require global perturbations and thus are difficult to realize in the physical world.} To counter the threat of patch attacks on real-world ML systems, the security community has been actively seeking defense mechanisms~\cite{hayes2018visible,lin2017focal,zhang2020clipped,chiang2020certified,levine2020randomized,xiang2020patchguard,mccoyd2020minority}. However, most existing defenses are restricted to the image classification domain. In this paper, we aim to secure \textit{object detectors}, which are used in critical applications like autonomous driving, video surveillance, and identity verification~\cite{vahab2019applications}.

We focus on the threat of \textit{localized patch hiding attacks} against object detectors: an attacker uses a localized patch for physical world attacks that cause the object detector to fail to detect victim objects. Lee et al.~\cite{lee2019physical} show that a physical patch far away from the objects can successfully ``hide" victim objects. Wu et al.~\cite{wu2019making} and Xu et al.~\cite{xu2020adversarial} have succeeded in evading object detection via wearing a T-shirt printed with adversarial perturbations. The patch hiding attack can cause serious consequences in scenarios like an autonomous vehicle missing a pedestrian (or an upcoming car).

Unfortunately, securing object detectors is extremely challenging due to the complexity of the detection task. A single image can contain multiple objects, and thus an object detector needs to output a list of object bounding box coordinates and class labels. To the best of our knowledge, there is only one prior work~\cite{saha2020role} discussing defenses for YOLOv2~\cite{redmon2017yolo9000} detectors against patch attacks, in contrast to numerous new patch attacks being proposed~\cite{eykholt2018physical,chen2018shapeshifter,zhao2019seeing,thys2019fooling,wu2019making,xu2020adversarial}. Furthermore, this only defense~\cite{saha2020role} is restricted to the setting of a non-adaptive adversarial patch at the image corner and does not have any security guarantee (i.e., only heuristics-based). To overcome these weaknesses, we propose a defense framework named \framework that can achieve \textit{provable robustness} against any patch hiding attack within our threat model.

\begin{figure*}[t]
    \centering
    \includegraphics[width=\linewidth]{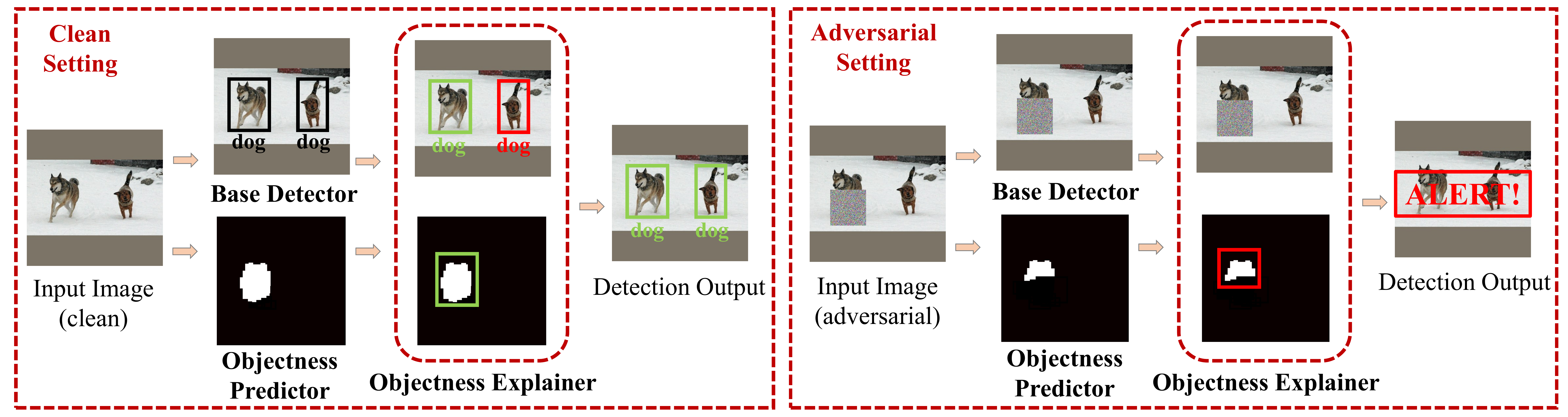}
    \caption{\framework Overview. \textmd{\textit{\base} is a conventional detector that typically predicts precise bounding boxes on clean images. \textit{\defense} aims to robustly predict an objectness map. \textit{\matcher} uses the predicted bounding boxes to explain/match the predicted objectness and determines the final output. \textit{In the clean setting (left figure)}, the dog on the left is detected by both \base and \defense. The objectness is well explained/matched by the green bounding box, and \framework outputs the bounding box predicted by \base. In the meantime, due to the imperfection of the robust classifier, the dog on the right is missed by \defense. \matcher will consider this as a \textit{benign mismatch}, and \framework will also output the predicted bounding box from \base. \textit{In the adversarial setting (right figure)}, a patch makes \base fail to detect any object while \defense still robustly outputs high objectness. \matcher detects a cluster of \textit{unexplained objectness} and triggers an attack alert. Our defense can achieve provable robustness for certified objects while maintaining a clean performance comparable to state-of-the-art object detectors.} }
    \label{fig-overview}
\end{figure*}

\textbf{Insight \& Challenge: a bridge between robust image classification and robust object detection.} To start with, we observe that the robust image classification research is making significant advancement~\cite{chiang2020certified,zhang2020clipped,levine2020randomized,mccoyd2020minority,xiang2020patchguard,metzen2021efficient} while object
detectors remain vulnerable to attacks. This sharp contrast motivates us to ask: \textit{can we take advantage of robust image classification research and adapt robust image classifiers for robust object detection?} Unfortunately, there is a huge gap between these two tasks: an image classifier outputs a single-label prediction while an object detector predicts a list of bounding boxes and class labels. This gap brings two major challenges for such an adaptation. \textit{Challenge 1: Lack of End-to-end Provable Robustness.} A robust image classifier only provides robustness for a single-label prediction while an object detector requires robustness for multiple object labels and bounding boxes in each image. Therefore, using a robust image classifier does not guarantee the end-to-end robustness of the object detector, and we need a new defense design and proof of robustness. \textit{Challenge 2: Amplified Cost of Clean Performance.} All existing provably robust image classifiers~\cite{chiang2020certified,zhang2020clipped,levine2020randomized,xiang2020patchguard,metzen2021efficient} attain robustness at a non-negligible cost of clean performance (e.g., state-of-the-art defense~\cite{xiang2020patchguard} incurs a >20\% clean accuracy drop on ImageNet~\cite{deng2009imagenet}). The imperfection of robust classifiers can be severely amplified during the adaptation towards the much more demanding object detection task. Therefore, we need to prevent our object detectors from being broken in the clean setting (even in the absence of an adversary). In \framework, we overcome these two challenges as discussed below. 

\textbf{Defense Design: an objectness explaining strategy.}  We provide our defense overview in Figure~\ref{fig-overview}. \framework has three modules: \base, \defense, and \matcher. \base can be any state-of-the-art object detector that can make \textit{accurate} predictions on clean images but is vulnerable to patch hiding attacks. \defense aims to predict a \textit{robust} objectness map, which indicates the probability of an object being present at different locations. We build \defense using adapted provably robust image classifiers together with carefully designed feature-space operation and error filtering mechanisms (Section~\ref{sec-obj-predictor}). Finally, \matcher uses each predicted bounding box from \base to \textit{explain/match} high objectness predicted by \defense (Section~\ref{sec-matcher}). 
If all objectness is well explained/matched, we output the prediction of \base; otherwise, we issue an attack alert. In the clean setting, we optimize the configuration of \defense towards the case where all objectness can be explained and then use \base for accurate final predictions (Figure~\ref{fig-overview} left). When a hiding attack occurs, \base could miss the object while \defense can still robustly output high objectness. This will lead to unexplained objectness and trigger an attack alert (Figure~\ref{fig-overview} right). Notably, we can show that our defense design successfully addresses \textit{Challenge 1 and 2}, as discussed next.

\textbf{End-to-end provable robustness for ``free".} First, our objectness explaining strategy enables us to rigorously prove the end-to-end robustness of \framework (Theorem~\ref{thm} in Section~\ref{sec-robustness-analysis}). We will show that \framework will always perform robust detection or issue an alert on objects certified by our provable analysis (Algorithm~\ref{alg-dpg-analysis} in Section~\ref{sec-robustness-analysis}). We note that this robustness property is agnostic to attack strategies and holds for \textit{any patch hiding attacker within our threat model, including adaptive attackers who have full access to our defense setup.} This strong theoretical guarantee addresses \textit{Challenge 1}. Next, in contrast to most security-critical systems whose robustness comes at the cost of clean performance, \framework achieves provable robustness for ``free" (at a negligible cost of clean performance). In \defense, we design error mitigation mechanisms to handle the imperfection of the adapted robust classifier. In \matcher, our explaining strategy ensures that even when our \defense fails to predict high objectness (missing objects; false negatives), \framework still performs as well as state-of-the-art \base.\footnote{\defense can also have other types of errors. However, we can optimize its configuration to ensure most errors are false-negatives, which our objectness explaining strategy can tolerate. More discussions are in Section~\ref{sec-defense}. We also note that we manage to build a system with high clean performance (i.e., \framework) despite the use of a module with poor clean performance (i.e., provably robust image classifier). We provide additional discussion on this intriguing property in Appendix~\ref{apx-why}.}  These designs solve \textit{Challenge 2}.

\textbf{Evaluating the first provable robustness against patch hiding attacks.} We extensively evaluate \framework performance on the PASCAL VOC~\cite{voc}, MS COCO~\cite{coco}, and KITTI~\cite{geiger2013vision} datasets. In our evaluation, we instantiate the \base with YOLOv4~\cite{bochkovskiy2020yolov4,wang2020scaled}, Faster R-CNN~\cite{ren2015faster}, and a hypothetical object detector that is perfect in the clean setting. We build \defense by adapting multiple variants of robust image classifiers~\cite{xiang2020patchguard,zhang2020clipped}. 
Our evaluation shows that our defense has a minimal impact (<1\%) on the clean performance and achieves the first provable robustness against patch hiding attacks. Our code is available at \url{https://github.com/inspire-group/DetectorGuard}.

Our contributions can be summarized as follows.
\begin{itemize}
    \item We solve two major challenges in adapting robust image classifiers for robust object detection via a careful design of \defense and \matcher.
    \item We formally prove the robustness guarantee of \framework on certified objects against any adaptive attacker within our threat model. 
    \item We extensively evaluate our defense on the PASCAL VOC~\cite{voc}, MS COCO~\cite{coco}, and KITTI~\cite{geiger2013vision} datasets and demonstrate \textit{the first provable robustness against patch hiding attacks} and a similar clean performance as conventional object detectors.
\end{itemize}

\section{Background and Problem Formulation}\label{sec-preliminary}

In this section, we introduce the object detection task, the patch hiding attack, the defense formulation, and the key principles for building provably robust image classifiers that we will adapt for robust object detection in \framework.  

\subsection{Object Detection}\label{sec-obj-det}
\textbf{Detection objective.} An object detector aims to predict a list of bounding boxes (and class labels) for all objects in the image $\mathbf{x}\in[0,1]^{W\times H \times C}$, where pixel values are rescaled into $[0,1]$, and $W,H,C$ is the image width, height, and channel, respectively. 
Each bounding box $\mathbf{b}$ is represented as a tuple $(x_{\min},y_{\min},x_{\max},y_{\max},l)$, where $x_{\min},y_{\min},x_{\max},y_{\max}$ together illustrate the coordinates of the bounding box, and $l \in \mathbf{L}=\{0,1,\cdots,N-1\}$ denotes the predicted object label ($N$ is the number of object classes).\footnote{Conventional object detectors usually output objectness score and prediction confidence as well---we discard them in notation for simplicity.}

\textbf{Conventional object detector.} Object detection models can be categorized into two-stage and one-stage detectors depending on their pipelines. A two-stage object detector first generates proposals for regions that might contain objects and then uses the proposed regions for object classification and bounding-box regression. Representative examples include Faster R-CNN~\cite{ren2015faster} and Mask R-CNN~\cite{he2017mask}. On the other hand, a one-stage object detector does detection directly on the input image without any explicit region proposal step. SSD~\cite{liu2016ssd}, YOLO~\cite{redmon2016you,redmon2017yolo9000,redmon2018yolov3,bochkovskiy2020yolov4,wang2020scaled}, RetinaNet~\cite{lin2017focal}, and EfficientDet~\cite{tan2020efficientdet} are representative one-stage detectors.

Conventionally, a detection is considered correct when 1) the predicted label matches the ground truth and 2) the overlap 
between the predicted bounding box and the ground-truth box, measured by Intersection over Union (IoU), exceeds a certain threshold $\tau$. We term a correct detection a \textit{true positive (TP)}. On the other hand, any predicted bounding box that fails to satisfy both two TP criteria is considered as a \textit{false positive (FP)}. Finally, if a ground-truth object is not detected by any TP bounding box, it is a \textit{false negative (FN)}.

\subsection{Attack Formulation}
\textbf{Attack objective.} The hiding attack~\cite{liu2019dpatch,zhao2019seeing,thys2019fooling,xu2020adversarial,wu2019making}, also referred to as the false-negative (FN) attack, aims to make object detectors miss the detection of certain objects (which increases FN) at the test time. The hiding attack can cause serious consequences in scenarios like an autonomous vehicle missing a pedestrian. Therefore, defending against patch hiding attacks is of great importance.

\textbf{Attacker capability.} We allow the localized adversary to arbitrarily manipulate pixels within one restricted region.\footnote{Provably robust defenses against \textit{one} single patch are currently an open/unsolved problem, and hence the focus of this paper. In Appendix~\ref{apx-multiple-patch}, we will justify our one-patch threat model and quantitatively discuss the implication of multiple patches.} Formally, we can use a binary \textit{pixel mask} $\mathbf{pm} \in \{0,1\}^{W\times H}$ to represent this restricted region, where the pixels within the region are set to $1$. The adversarial image then can be represented as $\mathbf{x}^\prime = (\mathbf{1}-\mathbf{pm})\odot \mathbf{x} + \mathbf{pm} \odot \mathbf{x}^{\prime\prime}$ where $\odot$ denotes the element-wise product operator, and $\mathbf{x}^{\prime\prime}\in [0,1]^{W\times H \times C}$ is the content of the adversarial patch, which the adversary can arbitrarily modify. $\mathbf{pm}$ is a function of patch size and patch location. The patch size should be limited such that the object is recognizable by a human. For patch locations, we consider three different threat models: \textit{over-patch, close-patch, far-patch}, where the patch is over, close to, or far away from the victim object, respectively. The adversary can pick any valid location within the threat model for an optimal attack.

Previous works~\cite{liu2019dpatch,saha2020role,lee2019physical} have shown that attacks against object detectors can succeed even when the patch is far away from the victim object. Therefore, defending against all three threat models is of interest.

\subsection{Defense Formulation}\label{sec-defense-formulation}

\textbf{Defense objective.} We focus on defending against patch hiding attacks. We consider our defense to be robust on an object if we can 1) detect the object on the clean image is correct and 2) detect part of the object or send out an attack alert on the adversarial image.\footnote{We note that in the adversarial setting, we only require the predicted bounding box to cover part of the object because it is likely that only a small part of the object is recognizable due to the adversarial patch (e.g., the left dog in the right part of Figure~\ref{fig-overview}). We provide additional justification for our defense objective in Appendix~\ref{apx-defense-obj}.}

Crucially, we design our defense to be \textit{provably robust}: for an object certified by our provable analysis, our defense can either detect the certified object or issue an alert regardless of what the adversary does (including any adaptive attack at any patch location within the threat model).
This robustness property is agnostic to the attack algorithm and holds against an adversary that has full access to our defense setup.

\textbf{Remark: primary focus on hiding attacks.} In this paper, we focus on the hiding attack because it is the most fundamental and notorious attack against object detectors. We can visualize dividing the object detection task into two steps: 1) detecting the object bounding box and then 2) classifying the detected object. If the first step is compromised by the hiding attack, there is no hope for robust object detection. On the other hand, \textit{securing the first step against the patch hiding attack lays a foundation for the robust object detection}; we can design effective remediation for the second step if needed (Section~\ref{sec-discussion}).

Take the application domain of autonomous vehicles (AV) as an example: an AV missing the detection of an upcoming car could lead to a serious car accident. However, if the AV detects the upcoming object but predicts an incorrect class label (e.g., mistaking a car for a pedestrian), it can still make the correct decision of stopping and avoiding the collision. Moreover, in challenging application domains where the predicted class label is of great importance (e.g., traffic sign recognition), we can feed the detected bound box to an auxiliary image classifier to re-determine the class label. The defense problem is then reduced to the robust image classification and has been studied by several previous works~\cite{xiang2020patchguard,zhang2020clipped,levine2020randomized,metzen2021efficient}. Therefore, we make the hiding attack the primary focus of this paper and will also discuss the extension of \framework against other attacks in Section~\ref{sec-discussion}. 

\subsection{Provably Robust Image Classification}\label{sec-prelim-classifier}
In this subsection, we introduce two key principles that are widely adopted in recent research on provably robust image classification against adversarial patches~\cite{zhang2020clipped,levine2020randomized,xiang2020patchguard,metzen2021efficient}. In Section~\ref{sec-defense-classifier}, we will discuss how to adapt these two principles to build a robust image classifier, which will be alter used in \defense (Section~\ref{sec-obj-predictor}).

\textbf{Feature extractor -- use small receptive fields.} The receptive field of a Deep Neural Network (DNN) is the input pixel region where each extracted feature is looking at. If the receptive field of a DNN is too large, then a small adversarial patch can corrupt most extracted features and easily manipulate the model behavior~\cite{xiang2020patchguard,liu2019dpatch,saha2020role}. On the other hand, the small receptive field bounds the number of corrupted features by $\lceil(\texttt{p}+\texttt{r}-1)/\texttt{s}\rceil$, where \texttt{p} is the patch size, \texttt{r} is the receptive field size, and \texttt{s} is the stride of receptive field (the pixel distance between two adjacent receptive field centers)~\cite{xiang2020patchguard}, and makes robust classification possible~\cite{levine2020randomized,zhang2020clipped,xiang2020patchguard,metzen2021efficient}. Popular design choices including the BagNet architecture~\cite{brendel2019approximating,zhang2020clipped,xiang2020patchguard,metzen2021efficient} and an ensemble architecture using small pixel patches as inputs~\cite{levine2020randomized,xiang2020patchguard}.

\textbf{Classification head -- do secure feature aggregation.} Given a feature map, DNN uses a \textit{classification head}, which consists of a feature aggregation layer and a fully-connected (classification) layer, to make final predictions. Since the small receptive field bounds the number of corrupted features, we can use secure aggregation techniques to build a robust classification head; design choices include clipping~\cite{zhang2020clipped,xiang2020patchguard}, masking~\cite{xiang2020patchguard}, and majority voting~\cite{levine2020randomized,metzen2021efficient}.
\section{\framework}\label{sec-defense}

In this section, we first introduce the key insight and overview of \framework, and then detail the design of our defense modules (\defense and \matcher).

\subsection{Defense Overview}\label{sec-defense-overview}

\noindent \textbf{Bridging robust image classification and robust object detection.} There has been a significant advancement in (provably) robust image classification research~\cite{chiang2020certified,zhang2020clipped,levine2020randomized,xiang2020patchguard,metzen2021efficient} while object detectors remain vulnerable. This sharp contrast motivates us to ask: \textit{can we adapt robust image classifiers for robust object detection?} 
Unfortunately, there is a huge gap between these two tasks: an image classifier only robustly predicts one single label for each image while an object detector has to robustly output a list of class labels and object bounding boxes. This gap leads to two major challenges.
\begin{itemize}
    \item \textit{Challenge 1: Lack of End-to-end Provable Robustness.} A robust image classifier only provides robustness for single-label predictions while a robust object detector requires robustness for multiple labels and bounding boxes. Therefore, an object detector adapted from a robust image classifier can still be vulnerable without any security guarantee, and we aim to carefully design our defense pipeline to enable the proof of end-to-end robustness for object detection. 
    \item \textit{Challenge 2: Amplified Cost of Clean Performance.} All existing provably robust image classifiers~\cite{chiang2020certified,zhang2020clipped,levine2020randomized,xiang2020patchguard,metzen2021efficient} attain robustness at a non-negligible cost of clean performance (e.g., >20\% clean accuracy drop on ImageNet~\cite{deng2009imagenet}), and this cost can be severely amplified when adapting towards the more demanding object detection task. An object detector with poor clean performance (even in the absence of an adversary) prohibits its real-world deployment; therefore, we aim to minimize the clean performance cost in our defense.
\end{itemize}

\textbf{\framework: an objectness explaining strategy.} In \framework, we propose an objectness explaining strategy to addresses the above two challenges. Recall that Figure~\ref{fig-overview} provides an overview of \framework, which will either output a list bounding box predictions (left figure; clean setting) or an attack alert (right figure; adversarial setting). There are three major modules in \framework: \base, \defense, and \matcher. \base is responsible for making accurate detections in the clean setting and can be any popular high-performance object detector such as YOLOv4~\cite{bochkovskiy2020yolov4,wang2020scaled} and Faster R-CNN~\cite{ren2015faster}. \defense is adapted from the core principles for building robust image classifiers as introduced in Section~\ref{sec-prelim-classifier} and aims to output a robust objectness map in the adversarial environment. We also carefully design \defense to mitigate the errors made by the robust image classifier in the clean setting. Finally, \matcher leverages predicted bounding boxes from \base to explain/match the objectness predicted by \defense and aims to catch a malicious attack. When no attack is detected, \framework will output the detection results of \base (i.e., a conventional object detector), so that our clean performance is close to state-of-the-art object detectors. When a patch hiding attack occurs, \base can miss the object while \defense can robustly predict high objectness. 
This mismatch will lead to unexplained objectness and trigger an attack alert. Notably, our objectness explaining strategy can achieve end-to-end provable robustness for ``free" (at a negligible cost of clean performance) and solve two major challenges. We will introduce the module details and theoretically analyze the free provable robustness property.

 \textbf{Algorithm Pseudocode.}
We provide the pseudocode of \framework in Algorithm~\ref{alg-dpg} and a summary of important notation in Table~\ref{tab-notation}. The main procedure $\textsc{DG}(\cdot)$ has three sub-procedures: $\textsc{BaseDetector}(\cdot)$, $\textsc{ObjPredictor}(\cdot)$, and $\textsc{DetMatcher}(\cdot)$. The sub-procedure  $\textsc{BaseDetector}(\cdot)$ can be any off-the-shelf object detector as discussed in Section~\ref{sec-obj-det}. All tensors/arrays are represented with bold symbols and scalars are in italic.
All tensor/array indices start from zeros; the tensor/array slicing is in Python style (e.g., $[i:j]$ means all indices $k$ satisfying $i\leq k<j$). We assume that the ``background" class corresponds to the largest class index.

In the remainder of this section, we first introduce how we instantiate robust image classifiers and then discuss the design of \defense and \matcher.

\begin{table}[t]
    \centering
    \caption{Summary of important notation}
    \vspace{-1em}
 \resizebox{\linewidth}{!}
  { \begin{tabular}{l|l|l|l}
    \toprule
    \textbf{Notation} & \textbf{Description} & \textbf{Notation} & \textbf{Description} 
    \\
    \midrule
 $\mathbf{x}$ & Input image & $\mathbf{b}$ &  bounding box \\
  $\mathbf{fm}$ &Feature map & $\mathbf{om}$ &Objectness map \\
   $\mathbf{v}$ &classification logits & $N$ &number of object classes\\
  $(w_x,w_y)$ & window size & $(p_x,p_y)$ & patch size \\
   $T$ &binarizing threshold & {$\mathcal{D}$ }& detection results \\
 
$\mathbf{u},\mathbf{l}$ &  \multicolumn{3}{l}{upper/lower bound of classification logits values of each class}\\
      \bottomrule
    \end{tabular}}
    \label{tab-notation}
\end{table}

\subsection{Instantiating Robust Image Classifiers}\label{sec-defense-classifier}
To start with, we discuss how we build robust image classifiers that will be used in \defense. 

As discussed in Section~\ref{sec-prelim-classifier}, we can build a robust image classifier $\textsc{RC}(\cdot)$ using a \textit{feature extractor} $\textsc{FE}(\cdot)$ with small receptive fields, and a \textit{robust classification head} $\textsc{RCH}(\cdot)$ with secure feature aggregation. In our design, we choose BagNet~\cite{brendel2019approximating} backbone as the feature extractor $\textsc{FE}(\cdot)$, and we clip elements of local logits vectors\footnote{The local logits vector~\cite{xiang2020patchguard} is the classification logits based on each local feature that is extracted from a particular region (i.e., the receptive field) of the input image.} into $[0,\infty]$ for secure aggregation in $\textsc{RCH}(\cdot)$. This implementation is similar to the robust image classifier Clipped BagNet (CBN)~\cite{zhang2020clipped}, but we note that we use a different clipping function that is tailored to our more challenging task of \textit{object detection}. In Appendix~\ref{apx-robust-classifier}, we provide additional details of $\textsc{FE}(\cdot)$ and $\textsc{RCH}(\cdot)$ and also discuss alternative design choices of robust image classifier $\textsc{RC}(\cdot)$ (e.g., robust masking from PatchGuard~\cite{xiang2020patchguard}).

\textbf{Remark: Limitations of robust classifiers.} We note that the adapted robust image classifier $\textsc{RC}(\cdot)$ achieves robustness at the cost of a non-negligible clean performance drop~\cite{xiang2020patchguard,levine2020randomized,metzen2021efficient}. Therefore, in the clean setting, classification at different image locations can be imprecise with three typical errors that lead to \textit{Challenge 2}:
\begin{itemize}
    \item \textit{Clean Error 1:} Confusion between two object class labels
    \item  \textit{Clean Error 2:} Predicts background pixels as objects 
     \item \textit{Clean Error 3:} Predicts objects as ``background"
\end{itemize}
In the next two subsections, we will discuss how \framework design can eliminate/mitigate these three clean errors.

\subsection{\defense}\label{sec-obj-predictor}
\textbf{Overview.} \defense aims to output a robust objectness map that indicates the probability of objects being present at different locations. Its high-level idea is to perform robust \textit{image classification} on different regions to predict an object class label or ``background". We provide a simplified visual overview in Figure~\ref{fig-obj}. \defense involves two major operations: robust feature-space window classification and objectness map generation. The first step aims to perform efficient robust classification at different image regions and the second step aims to filter out clean errors made by the robust classifier and generate the final objectness map.

\begin{figure}[t]
    \centering
    \includegraphics[width=\linewidth]{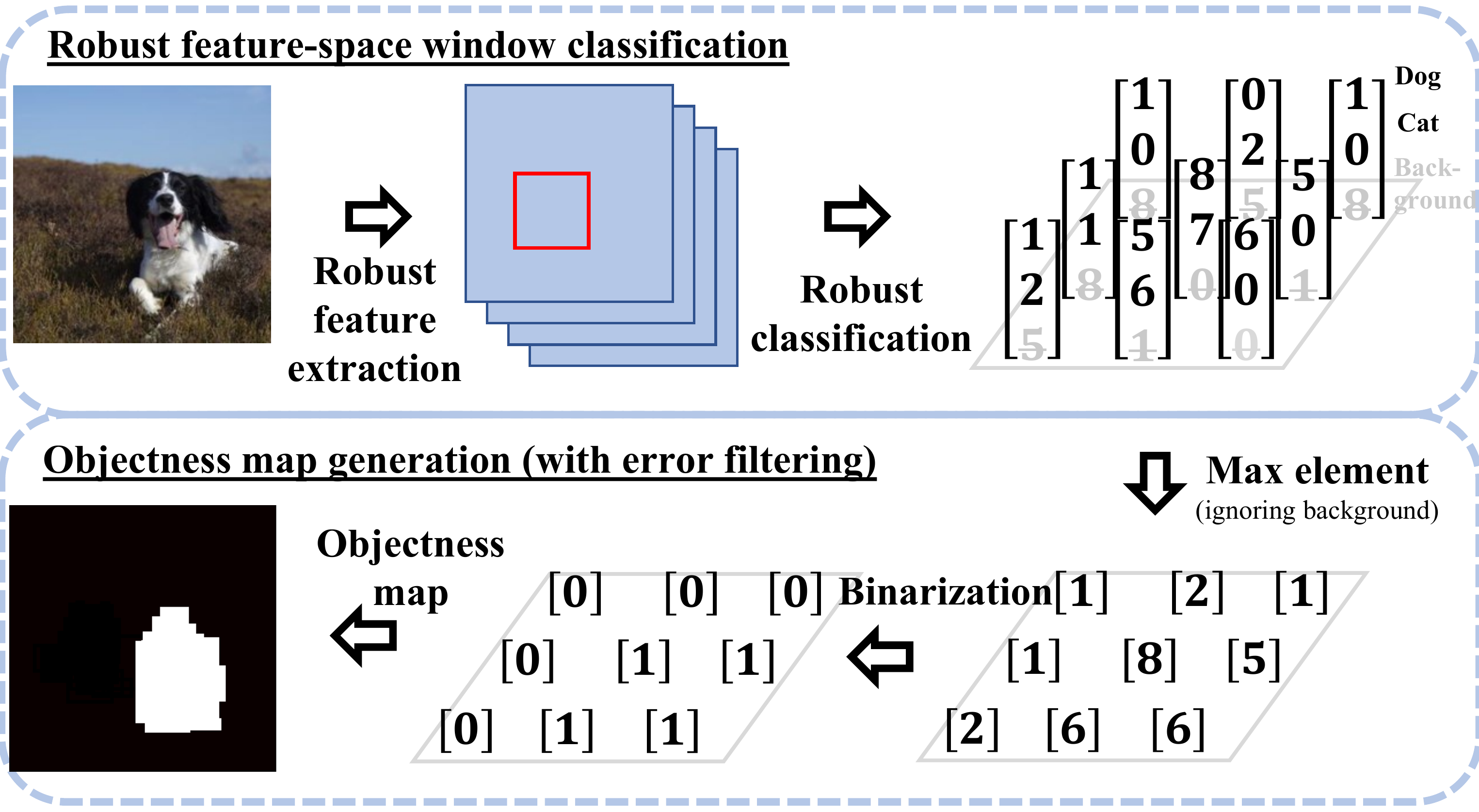}
    \caption{Visualization of Objectness Predictor operations}
    \label{fig-obj}
\end{figure}

\textbf{Robust feature-space window classification.} To perform robust image classification at different image locations, we first extract a feature map via $\textsc{FE}(\cdot)$, and then apply the robust image classification head $\textsc{RCH}(\cdot)$ to a sliding window over the feature map.

\begin{algorithm}[!t]
\caption{\framework}\label{alg-dpg}
\begin{algorithmic}[1]
\renewcommand{\algorithmicrequire}{\textbf{Input:}}
\renewcommand{\algorithmicensure}{\textbf{Output:}}
\Require input image $\mathbf{x}$, window size $(w_x,w_y)$, binarizing threshold $T$, \base $\textsc{BaseDetector}(\cdot)$, robust classifier $\textsc{RC}(\cdot)$ (consisting of a feature extractor $\textsc{FE}(\cdot)$ and a robust classification head $\textsc{RCH}(\cdot)$), cluster detection procedure $\textsc{DetCluster}(\cdot)$
\Ensure  robust detection $\mathcal{D}^*$ or $\texttt{ALERT}$
\Procedure{DG}{$\mathbf{x},w_x,w_y,T$}
\State ${\mathcal{D}}\gets \textsc{BaseDetector}(\mathbf{x})$\Comment{Conventional detection}
\State $\mathbf{om} \gets \textsc{ObjPredictor}(\mathbf{x},w_x,w_y,T)$ \Comment{Objectness}
\State $a\gets\textsc{ObjExplainer}({\mathcal{D}},\mathbf{om})$ \Comment{Detect hiding attacks}
\If{${a}== \texttt{True}$}\label{ln-output-s}\Comment{Malicious mismatch}
\State$\mathcal{D}^*\gets\texttt{ALERT}$\Comment{Trigger an alert}
\Else
\State $\mathcal{D}^*\gets{\mathcal{D}}$\label{ln-return}\Comment{Return \base's predictions}
\EndIf
\State\Return $\mathcal{D}^*$\label{ln-output-e}
\EndProcedure

\item[]
\Procedure{ObjPredictor}{$\mathbf{x},w_x,w_y,T$}\label{ln-obj-pred-s}
\State $\mathbf{fm} \gets \textsc{FE}(\mathbf{x})$\label{ln-fe} \Comment{Extract feature map}
\State $X,Y,\_ \gets \textsc{Shape}(\mathbf{fm})$ \Comment{Get the shape of $\mathbf{fm}$}
\State $\bar{\mathbf{om}} \gets \textsc{ZeroArray}[X,Y,N+1]$\label{ln-op-empty}\Comment{Initialization}
\For{each valid $(i,j)$}\label{ln-rc-s}\Comment{Every window location}
\State $l,\mathbf{v} \gets\textsc{RCH}(\mathbf{fm}[i:i+w_x,j:j+w_y])$\label{ln-rc-e} \Comment{Classify}

\LeftCommentb{Add classification logits}
\State $\bar{\mathbf{om}}[i\text{ : }i+w_x,j\text{ : }j+w_y]\gets\bar{\mathbf{om}}[i\text{ : }i+w_x,j\text{ : }j+w_y]+\mathbf{v}$\label{ln-op-om}
\EndFor
\State $\bar{\mathbf{om}}\gets \textsc{MaxObj}(\bar{\mathbf{om}},\texttt{axis}=-1)$\label{ln-max}\Comment{Max objectness score}
\State ${\mathbf{om}}\gets \textsc{Binarize}(\bar{\mathbf{om}},T\cdot w_x\cdot w_y)$\label{ln-bin}\Comment{Binarization}
\State\Return $\mathbf{om}$
\EndProcedure\label{ln-obj-pred-e}
\item[]

\Procedure{ObjExplainer}{$\mathcal{D},\mathbf{om}$}
\State $\mathbf{\hat{om}}\gets \textsc{Copy}(\mathbf{om})$ \Comment{A copy of $\mathbf{om}$}\label{ln-match-s}
\LeftCommenta{Match each detected box to objectness map}
\For {$i\in\{0,1,\cdots,|\mathcal{D}|-1\}$}
\State $x_{\min},y_{\min},x_{\max},y_{\max},l \gets \mathbf{b} \gets {\mathcal{D}}[i]$
\If {$\textsc{Sum}({\mathbf{om}}[x_{\min}:x_{\max},y_{\min}:y_{\max}])>0$}
\State ${\mathbf{\hat{om}}}[x_{\min}:x_{\max},y_{\min}:y_{\max}])\gets\mathbf{0}$
\EndIf
\EndFor\label{ln-match-e}
\If{$\textsc{DetCluster}({\mathbf{\hat{om}}})$ is \texttt{None}}
\State \Return\texttt{False}\Comment{All objectness explained}
\Else
\State \Return\texttt{True}\Comment{Unexplained objectness}
\EndIf
\EndProcedure

\end{algorithmic} 
\end{algorithm}

\textit{Pseudocode}. The pseudocode of \defense in Line~\ref{ln-obj-pred-s}-\ref{ln-obj-pred-e} of Algorithm~\ref{alg-dpg}. We first extract the feature map $\mathbf{fm}$ with $\textsc{FE}(\cdot)$ (Line~\ref{ln-fe}). Next, for every valid window location, represented as $(i,j)$,\footnote{We will use feature-space coordinates for the remainder of the paper. The mapping between pixel-space and feature-space coordinates is discussed in Appendix~\ref{apx-mapping}.} we feed the feature window $\mathbf{fm}[i:i+w_x,j:j+w_y]$ to a robust classification head $\textsc{RCH}(\cdot)$ to get the classification label $l$ and the classification logits $\mathbf{v}\in\mathbb{R}^{N+1}$ for $N$ object classes and the ``background" class (Line~\ref{ln-rc-e}). The use of $\textsc{RCH}(\cdot)$ ensures that the classification is robust when window features are corrupted.

\textit{Remark: defense efficiency.} We note that our window classification operates in the feature space, and this allows us to reuse the expensive feature map generation (i.e., $\textsc{FE}(\cdot)$); each classification only needs a cheap computation of the classification head (i.e., $\textsc{RCH}(\cdot)$). Therefore, our defense only incurs a small overhead (will be evaluated in Section~\ref{sec-eval-hyp}).

\textbf{Objectness map generation: handling clean errors of robust classifiers.} Next, we aim to filter out incorrect window classifications and generate the final objectness map. We reduce each prediction vector at each location to its maximum non-background element to discard label information (eliminating \textit{Clean Error 1}) and perform binarization (with a threshold) to remove objectness predicted with low confidence (mitigating \textit{Clean Error 2}).

\textit{Pseudocode.} First, we initialize an all-zero $\mathbb{R}^{N+1}$ vector ($N$ object classes plus the ``background" class) at every feature location and use one tensor $\bar{\mathbf{om}}$ to represent all vectors (Line~\ref{ln-op-empty}). Second, we aim to gather all window classification results: for each window, we add the classification logits $\mathbf{v}$ to every vector located within the window (Line~\ref{ln-op-om}). Third, we take the maximum non-background element in each vector as the objectness score at the corresponding location (Line~\ref{ln-max}). This operation discards label information and fully eliminates \textit{Clean Error 1}, e.g., confusion between bicycles and motorbikes. At last, we binarize every objectness score (Line~\ref{ln-bin}): if the score is larger than $T\cdot w_x \cdot w_y$, we set it to one; otherwise, set it to zero. This binarization mitigates \textit{Clean Error 2}, when the classifier incorrectly predicts background as objects but with low classification confidence. We discuss the strategy for \textit{Clean Error 3} in the next subsection.

\begin{figure}[t]
    \centering
    \includegraphics[width=\linewidth]{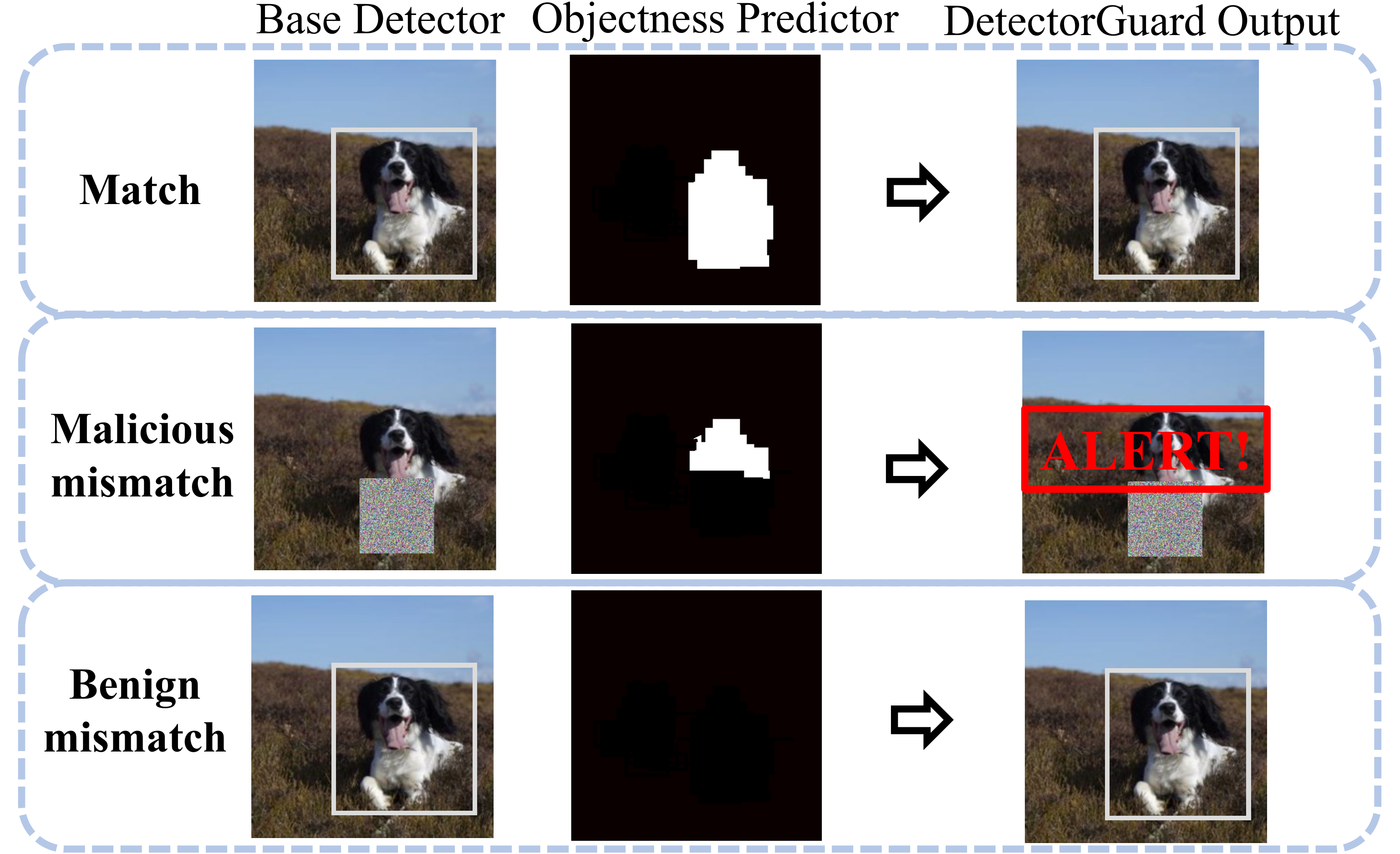}
    \caption{Visualization of explaining/matching rules}
   \label{fig-match}
\end{figure}
\subsection{\matcher}\label{sec-matcher}
\matcher takes as inputs the predicted bounding boxes of \base and the generated objectness map of \defense, and tries to use each predicted bounding box to explain/match the high activation in the objectness map. Its outcome determines the final prediction of \framework. We will first introduce the high-level explaining/matching rules and then elaborate on the algorithm.

 \textbf{Explaining/matching rules.} 
There are three possible explaining/matching outcomes, each of them leading to a different prediction strategy (a visual example is in Figure~\ref{fig-match}):
\begin{itemize}
    \item A \textbf{match} happens when \base and \defense both predict a bounding box or high objectness at a specific location. In this simplest case, \textit{the objectness is well explained by the bounding box}; our defense will consider the detection as correct and output the accurate bounding box and the class label predicted by \base.
    \item A \textbf{malicious mismatch} will be flagged when only \defense outputs high objectness. This is most likely to happen when a hiding attack fools the conventional object detector to miss the object while our \defense still makes robust predictions. In this case, our defense will find \textit{unexplained objectness} and send out an attack alert.
    \item A \textbf{benign mismatch} occurs when only \base detects the object and there is \textit{no objectness to be explained}. This can happen when \defense incorrectly misses the object due to its limitations (recall our remark in Section~\ref{sec-obj-predictor}). In this case, we trust \base and output its predicted bounding box. Notably, this strategy can fully eliminate \textit{Clean Error 3}, i.e., predicting objects as background.\footnote{We note that this miss can also be caused by other attacks that are orthogonal to the focus of this paper, e.g., FP attacks that aim to introduce incorrect bounding box predictions. We will discuss such attacks and our defense strategies in Section~\ref{sec-discussion}.}
\end{itemize}  
Next, we discuss the concrete procedure for explaining objectness.

\textbf{Matching and explaining objectness.} In Line \ref{ln-match-s}-\ref{ln-match-e} of Algorithm~\ref{alg-dpg}, we use each predicted bounding box to match/explain the objectness predicted at the same location. First, we create a copy of $\mathbf{om}$ as $\mathbf{\hat{om}}$ to hold the explaining results. Next, for each bounding box $\mathbf{b}$, we get its coordinates $x_{\min},y_{\min},x_{\max},y_{\max}$, and calculate the sum of objectness scores within the same box on the objectness map $\mathbf{{om}}$. If the objectness sum is larger than zero, we assume that the bounding box $\mathbf{b}$ agrees with $\mathbf{{om}}$, and we zero out the corresponding region in ${\mathbf{\hat{om}}}$, to indicate that this region of objectness has been \textit{explained} by the detected bounding box. On the other hand, if all objectness scores are zeros, we assume it is a benign mismatch; the algorithm proceeds without alert.

\textbf{Detecting clusters of unexplained objectness.} The final step is to detect unexplained objectness in $\mathbf{\hat{om}}$. We use the sub-procedure $\textsc{DetCluster}(\cdot)$ to determine if any non-zero points in ${\mathbf{\hat{om}}}$ form a large cluster. Specifically, we choose DBSCAN~\cite{ester1996density} as the cluster detection algorithm, which will assign each point to a certain cluster or label it as an outlier based on the point density in its neighborhood. If $\textsc{DetCluster}({\mathbf{\hat{om}}})$ returns \texttt{None}, it means that no large cluster is found, or all objectness predicted by \defense is explained by the bounding boxes predicted by \base; $\textsc{ObjExplainer}(\cdot)$ then returns \texttt{False} (i.e., no attack detected). We note that this clustering operation further mitigates \textit{Clean Error 2} when the robust classifier predicts background as objects at only a few scattered locations. On the other hand, receiving a non-empty cluster set indicates that there are clusters of unexplained objectness activations in ${\mathbf{\hat{om}}}$ (i.e, \base misses an object but \defense predicts high objectness). \matcher regards this as a sign of patch hiding attacks and returns \texttt{True}.

\textbf{Final output.} Line \ref{ln-output-s}-\ref{ln-output-e} of Algorithm~\ref{alg-dpg} demonstrates the strategy for the final prediction. If the alert flag $a$ is \texttt{True} (i.e., a malicious mismatch is detected), \framework returns $\mathcal{D}^* =\texttt{ALERT}$. In other cases, \framework returns the detection $\mathcal{D}^* = \mathcal{D}$.

\textbf{Remark: Clean performance of \framework.} Recall that \textit{Clean Error 1} of the robust classifier is \textit{fully eliminated} in our objectness map generation via discarding label information; \textit{Clean Error 2} is \textit{mitigated} via binarizing (in \defense) and clustering (in \matcher) operations; \textit{Clean Error 3} is \textit{fully tolerated} via our prediction matching strategy (the benign mismatch case). Therefore, we can safely optimize the setting of \framework to mitigate most of \textit{Clean Error 2} (which can lead to unexplained objectness in the clean setting and trigger a false alert) so that we can achieve a clean performance that is comparable to state-of-the-art object detectors (performance difference smaller than 1\%; more details are in Section~\ref{sec-evaluation}). This helps us solve \textit{Challenge 2: Amplified Cost of Clean Performance}. In the next section, we will demonstrate that our efforts in mitigating the imperfection of robust classifiers are worthwhile by showing how \framework addresses \textit{Challenge 1: Lack of End-to-end Provable Robustness}.

\section{End-to-end Provable Robustness}\label{sec-robustness-analysis}
\begin{algorithm}[!t]
\caption{Provable Analysis of \framework}\label{alg-dpg-analysis}
\begin{algorithmic}[1]
\renewcommand{\algorithmicrequire}{\textbf{Input:}}
\renewcommand{\algorithmicensure}{\textbf{Output:}}
\Require input image $\mathbf{x}$, window size $(w_x,w_y)$, matching threshold $T$, the set of patch locations $\mathcal{P}$, the object bounding box $\mathbf{b}$, feature extractor $\textsc{FE}(\cdot)$, provable analysis of the robust classification head $\textsc{RCH-PA}(\cdot)$, cluster detection procedure $\textsc{DetCluster}(\cdot)$
\Ensure  whether the object $\mathbf{b}$ in $\mathbf{x}$ has provable robustness

\Procedure{DG-PA}{$\mathbf{x},w_x,w_y,T,\mathcal{P},\mathbf{b}$}
\If{$\mathbf{b}\not\in\textsc{DG}(\mathbf{x},w_x,w_y,T)$}\label{ln-clean-s} 
\State\Return \texttt{False}\Comment{Clean detection is incorrect}
\EndIf\label{ln-clean-e}
\State $\mathbf{fm} \gets \textsc{FE}(\mathbf{x})$ \Comment{Extract feature map}
\For{each $\mathbf{p}\in\mathcal{P}$}\Comment{Check every patch location}
\State $x,y,p_x,p_y\gets\mathbf{p}$
\State $r \gets \textsc{DG-PA-One}(\mathbf{fm},x,y,w_x,w_y,p_x,p_y,\mathbf{b},T)$
\If{$r==\texttt{False}$} 
\State \Return \texttt{False}\Comment{Possibly vulnerable}
\EndIf
\EndFor
\State \Return \texttt{True}\Comment{Provably robust} \label{ln-robust}
\EndProcedure

\item[]

\Procedure{DG-PA-One}{$\mathbf{fm},x,y,w_x,w_y,p_x,p_y,\mathbf{b},T$}

\State $X,Y,\_ \gets \textsc{Shape}(\mathbf{fm})$ \Comment{Get the shape of $\mathbf{fm}$}
\State $\bar{\mathbf{om}}_*\gets \textsc{ZeroArray}[X,Y,N+1]$ \Comment{Initialization}
\LeftCommenta{Generates worst-case objectness map for analysis}
\For{each valid $(i,j)$}\label{ln-analysis-om-s}\Comment{Every window location}
\State $\mathbf{u},\mathbf{l}\gets\textsc{RCH-PA}({\mathbf{fm}[i:i+w_x,j:j+w_y]},x-i,y-j,p_x,p_y)$
\LeftCommentb{Add worst-case (lower-bound) logits}
\State $\bar{\mathbf{om}}_*[i:i+w_x,j:j+w_y]\gets\bar{\mathbf{om}}_*[i:i+w_x,j:j+w_y]+\mathbf{l}$
\EndFor\label{ln-analysis-om-e}
\State $\bar{\mathbf{om}}_*\gets \textsc{MaxObj}(\bar{\mathbf{om}}_*,\texttt{axis}=-1)$\Comment{Max objectness score}
\State ${\mathbf{om}}_*\gets\textsc{Binarize}(\bar{\mathbf{om}}_*,T\cdot w_x \cdot w_y)$\Comment{Binarization}
\State $x_{\min},y_{\min},x_{\max},y_{\max},l \gets \mathbf{b} $
\If{$\textsc{DetCluster}(\mathbf{om}_*[x_{\min}:x_{\max},y_{\min}:y_{\max}])$ is \texttt{None}}
\State\Return\texttt{False}\Comment{No high objectness left}
\Else
\State \Return \texttt{True}\Comment{High worst-case objectness}
\EndIf
\EndProcedure
\end{algorithmic}
\end{algorithm}

Recall that we consider \framework to be provably robust for a given object (in a given image) when it can make correct detection on the clean image and will either detect part of the object or issue an alert on the adversarial image. The robustness property holds for \emph{any adaptive patch hiding attacker at any location} within our threat model, including ones who have full access to our models and defense setup. In this section, we will prove the end-to-end robustness of \framework, solving \textit{Challenge 1}.

\textbf{Provable analysis of \framework.} Thanks to our objectness explaining strategy, a patch hiding attacker has to make \textit{both} \base and \defense fail to predict a bounding box, or high objectness, for a successful attack. Therefore, if we can prove that \defense can output high objectness for an object in the worst case, we can certify its provable robustness. We present the provable analysis of \framework in Algorithm~\ref{alg-dpg-analysis}. The algorithm takes a clean image $\mathbf{x}$, a ground-truth object bounding box $\mathbf{b}$,\footnote{We note that the ground-truth information is essential in our provable analysis (Algorithm~\ref{alg-dpg-analysis}) but is not used in our actual defense (Algorithm~\ref{alg-dpg}).} and a set of valid \textit{patch locations} $\mathcal{P}$ as inputs, and will determine whether the object in bounding box $\mathbf{b}$ in the image $\mathbf{x}$ has provable robustness against any patch at any location in $\mathcal{P}$. We state the correctness of Algorithm~\ref{alg-dpg-analysis} in Theorem~\ref{thm}, and will explain the algorithm details by proving the theorem.

\begin{ttheorem}\label{thm}
Given an object bounding box $\mathbf{b}$ in a clean image $\mathbf{x}$, a set of patch locations $\mathcal{P}$, window size $(w_x,w_y)$, and binarizing threshold $T$ (used in $\textsc{DG}(\cdot)$ of Algorithm~\ref{alg-dpg}), if Algorithm~\ref{alg-dpg-analysis} returns \texttt{True}, i.e., $\textsc{DG-PA}(\mathbf{x},w_x,w_y,T,\mathbf{b},\mathcal{P})=\texttt{True}$, \framework has provable robustness for the object $\mathbf{b}$ against any patch hiding attack using any patch location in $\mathcal{P}$.
\end{ttheorem}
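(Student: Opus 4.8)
The plan is to split the asserted property into the two conditions defining robustness in Section~\ref{sec-defense-formulation}---correctness on the clean image, and, on \emph{every} adversarial image, either covering part of the object or raising an alert---and to discharge each in turn. Clean correctness is immediate from Lines~\ref{ln-clean-s}--\ref{ln-clean-e}: if $\textsc{DG-PA}$ returns \texttt{True} then the guard $\mathbf{b}\in\textsc{DG}(\mathbf{x},w_x,w_y,T)$ passed, so on $\mathbf{x}$ the system emits a detection list (not an alert) that covers $\mathbf{b}$. The work lies entirely in the adversarial case, which I would at once reduce to a single patch location $\mathbf{p}\in\mathcal{P}$: since $\textsc{DG-PA}$ returns \texttt{True} only when $\textsc{DG-PA-One}$ succeeds for \emph{every} location in $\mathcal{P}$, it suffices to fix one arbitrary adversarial image $\mathbf{x}^\prime$ whose patch sits at $\mathbf{p}$.

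First I would establish the governing invariant: the binarized map $\mathbf{om}^\prime$ that $\textsc{ObjPredictor}$ actually produces on $\mathbf{x}^\prime$ dominates the worst-case map $\mathbf{om}_*$ built by $\textsc{DG-PA-One}$ pointwise, so that every location binarized to $1$ in $\mathbf{om}_*$ is also $1$ in $\mathbf{om}^\prime$. This is proved by pushing the inequality through the three operations shared by Algorithms~\ref{alg-dpg} and~\ref{alg-dpg-analysis}. Because $\textsc{RCH-PA}$ returns a per-class lower bound $\mathbf{l}$ valid for \emph{any} patch content at location $\mathbf{p}$, every window's true logit vector satisfies $\mathbf{v}\succeq\mathbf{l}$; the accumulation on Line~\ref{ln-op-om} and its worst-case analogue are identical fixed sums and so preserve the inequality coordinatewise; $\textsc{MaxObj}$ preserves it because the maximum of larger per-class entries is larger; and $\textsc{Binarize}$ against the common threshold $T\cdot w_x\cdot w_y$ then yields the stated binary domination. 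The only external fact invoked is the correctness of $\textsc{RCH-PA}$ as a certified lower bound---exactly the robust-classifier property adapted in Section~\ref{sec-defense-classifier}.

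With the invariant in hand, the core of the proof is a dichotomy on $\textsc{ObjExplainer}$ run with the genuine adversarial detections $\mathcal{D}^\prime=\textsc{BaseDetector}(\mathbf{x}^\prime)$ and $\mathbf{om}^\prime$. Since $\textsc{DG-PA-One}$ returned \texttt{True}, $\textsc{DetCluster}$ located a cluster $C$ of worst-case objectness inside $[x_{\min}{:}x_{\max},y_{\min}{:}y_{\max}]$, and by the invariant those same locations carry objectness $1$ in $\mathbf{om}^\prime$. Now either some predicted box $\mathbf{b}^\prime\in\mathcal{D}^\prime$ has a region meeting $C$, or none does. If one does, then since $C\subseteq\mathbf{b}$ the box $\mathbf{b}^\prime$ covers part of the object, and whether \framework finally alerts or returns $\mathcal{D}^\prime$ the adversarial obligation is met. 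If none does, the zeroing loop on Lines~\ref{ln-match-s}--\ref{ln-match-e} can never erase a location of $C$ (each box zeros only its own region, which here is disjoint from $C$), so $C$ survives in $\mathbf{\hat{om}}$, $\textsc{DetCluster}(\mathbf{\hat{om}})$ still reports it, $\textsc{ObjExplainer}$ returns \texttt{True}, and \framework alerts. Both branches satisfy the adversarial condition, completing the case and the theorem.

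I expect the dichotomy, rather than the monotonicity bookkeeping, to be the main obstacle. The delicate point is reconciling the \emph{local} certificate of $\textsc{DG-PA-One}$---a cluster confined to $\mathbf{b}$ in the \emph{unexplained} worst-case map---with the \emph{global} clustering that $\textsc{ObjExplainer}$ performs on the whole, partially-explained map, and arguing that an adaptive adversary driving $\textsc{BaseDetector}$ arbitrarily still cannot escape. The single observation that closes both branches is that a box zeros its own region exactly when its objectness sum is positive, so any box whose region meets $C$ automatically has positive sum (because $C$ carries objectness $1$ there) and simultaneously witnesses coverage of part of $\mathbf{b}$: the boxes capable of explaining $C$ away are precisely those that already defeat the hiding attack. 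The remaining routine checks are that a cluster surviving locally also passes the density-based $\textsc{DetCluster}$ on the larger map---its neighborhoods only grow---and that the clustering parameters stay identical across Algorithms~\ref{alg-dpg} and~\ref{alg-dpg-analysis}.
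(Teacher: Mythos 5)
Your proof is correct and takes essentially the same route as the paper's: check clean detection via the guard on Lines~\ref{ln-clean-s}--\ref{ln-clean-e}, then for each patch location use $\textsc{RCH-PA}(\cdot)$ to lower-bound the objectness map and argue that a surviving cluster inside $\mathbf{b}$ forces either a partial detection or an alert. You make explicit two steps the paper's proof leaves implicit --- the pointwise domination of the worst-case map $\mathbf{om}_*$ by the actual adversarial map, and the dichotomy in $\textsc{ObjExplainer}(\cdot)$ showing that any box capable of explaining the certified cluster away must itself overlap the object --- which strengthens rather than changes the argument.
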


\begin{proof}
$\textsc{DG-PA}(\cdot)$ first calls $\textsc{DG}(\cdot)$ of Algorithm~\ref{alg-dpg} to determine if \framework can detect the object bounding box $\mathbf{b}$ on the clean image $\mathbf{x}$. The algorithm will proceed only when the clean detection is correct (Line~\ref{ln-clean-s}-\ref{ln-clean-e}).

Next, we extract feature map $\mathbf{fm}$, iterate over each patch location in $\mathcal{P}$, and call the sub-procedure $\textsc{DG-PA-One}(\cdot)$, which analyzes worst-case behavior over all possible adversarial strategies for each patch location, to determine the model robustness. If any call of $\textsc{DG-PA-One}(\cdot)$ returns \texttt{False}, the algorithm returns \texttt{False}, indicating that at least one patch location can bypass our defense. On the other hand, if the algorithm tries all valid patch locations and does not return \texttt{False}, this means that \framework is provably robust to all patch locations in $\mathcal{P}$ and the algorithm returns \texttt{True}.

In the sub-procedure $\textsc{DG-PA-One}(\cdot)$, we analyze the worst-case output of \defense against the given patch location. We perform the provable analysis of the robust image classification head (using $\textsc{RCH-PA}(\cdot)$) to determine the lower/upper bounds of classification logits for each window. If the aggregated worst-case (i.e., lower bound) objectness map still has high activation for the object of interest, we can certify the robustness of \framework. 

As shown in the $\textsc{DG-PA-One}(\cdot)$ pseudocode, we first initialize a zero array $\bar{\mathbf{om}}_*$ to hold the worst-case objectness scores. We then iterate over each sliding window and call $\textsc{RCH-PA}(\cdot)$, which takes the feature map window $\mathbf{fm}[i:i+w_x,j:j+w_y]$, relative patch coordinates $(x-i,y-j)$, patch size $(p_x,p_y)$ as inputs and outputs the upper bound $\mathbf{u}$ and lower bound $\mathbf{l}$ of the classification logits. Since the goal of the hiding attack is to minimize the objectness scores, we only need to reason about the lower bound of classification logits. Recall that in $\textsc{RCH}(\cdot)$, we clip all local logits values into $[0,\infty]$; therefore, the best an adversary can do is to push all corrupted logits values down to zeros. We then compute the lower bound $\mathbf{l}$ by zeroing out all corrupted logits values and aggregating the remaining ones. We note that the sub-procedure $\textsc{DG-PA-One}(\cdot)$ aims to check defense robustness for a particular patch location; therefore, the patch location and corrupted features/logits are known in this provable analysis (we discuss how to map pixel-space coordinates to feature-space coordinates in Appendix~\ref{apx-mapping}).

Given the lower bound $\mathbf{l}$ of every window classification logits, we will add it to the corresponding region of $\bar{\mathbf{om}}_*$. After we analyze all valid windows, we call $\textsc{MaxObj}(\cdot)$ and $\textsc{Binarize}(\cdot)$ for the worst-case objectness map ${\mathbf{om}}_*$. We then get the cropped objectness map that corresponds to the object of interest (i.e., ${\mathbf{om}_*}[x_{\min}:x_{\max},y_{\min}:y_{\max}]$) and feed it to the cluster detection algorithm $\textsc{DetClutser}(\cdot)$. If \texttt{None} is returned, a hiding attack using this patch location might succeed, and the sub-procedure returns \texttt{False}. Otherwise, \defense has a high worst-case objectness and is thus robust to any attack using this patch location. This implies the provable robustness, and the sub-procedure returns \texttt{True}. 
\end{proof}

Theorem~\ref{thm} shows that if our provable analysis (Algorithm~\ref{alg-dpg-analysis}) returns \texttt{True} for certain objects, \framework (Algorithm~\ref{alg-dpg}) will always detect the certified object or issue an attack alert. This robustness property is agnostic to attack strategies and holds for any adaptive attacker at any location within our threat model. In our evaluation (next section), we will use Algorithm~\ref{alg-dpg-analysis} and Theorem~\ref{thm} to certify the provable robustness of every object in every image and report the percentage of certified objects.

\begin{table*}[t]
    \centering
    \caption{Clean performance of \framework}
    \vspace{-1em}
    \resizebox{\linewidth}{!}
{ \small\begin{tabular}{c|c|c|c|c|c|c|c|c|c}
    \toprule
    & \multicolumn{3}{c|}{PASCAL VOC}&\multicolumn{3}{c|}{MS COCO}&\multicolumn{3}{c}{KITTI}\\
    & vanilla AP& defense AP& FAR@0.8& vanilla AP& defense AP& FAR@0.6& vanilla AP& defense AP& FAR@0.8\\
    \midrule
    Perfect clean detector&100\%&99.3\%&0.9\% &100\%&99.0\%&1.2\%&100\%&99.0\% & 1.5\%\\
    YOLOv4&92.9\%&92.4\%&4.0\%&73.6\%&73.4\%&1.6\%&93.1\%&92.4\%&1.7\%\\
    Faster R-CNN&90.0\%&89.6\%&2.9\%&66.7\%&66.5\%&0.9\%&89.9\%&89.1\%&1.4\%\\
      \bottomrule
    \end{tabular}}
    \label{tab-clean-dpg}
\end{table*}
\section{Evaluation}\label{sec-evaluation}

In this section, we provide a comprehensive evaluation of \framework on PASCAL VOC~\cite{voc}, MS COCO~\cite{coco}, and KITTI~\cite{geiger2013vision} datasets. We will first introduce the datasets and models used in our evaluation, followed by our evaluation metrics. We then report our main evaluation results on different models and datasets, and finally provide a detailed analysis of \framework performance. Our code is available at \url{https://github.com/inspire-group/DetectorGuard}.

\subsection{Dataset and Model}

\noindent\textbf{Dataset:}

\underline{PASCAL VOC}~\cite{voc}. The detection challenge of PASCAL Visual Object Classes (VOC) project is a popular object detection dataset with annotations for 20 different classes. We take \texttt{trainval2007} (5k images) and \texttt{trainval2012} (11k images) as our training set and evaluate our defense on \texttt{test2007} (5k images), which is a conventional usage of the PASCAL VOC dataset~\cite{liu2016ssd,zhang2019towards}.

\underline{MS COCO}~\cite{coco}. The Microsoft Common Objects in COntext (COCO) dataset is an extremely challenging object detection dataset with 80 annotated common object categories. We use the training and validation set of \texttt{COCO2017} for our experiments. The training set has 117k images, and the validation set has 5k images. We ignore bounding boxes with the flag \texttt{iscrowd=1} for simplicity.

\underline{KITTI}~\cite{geiger2013vision}. KITTI is an autonomous vehicle dataset that contains both 2D camera images and 3D point clouds. We take its 7481 2D images and use 80\% of randomly splited images for training and the remaining 20\% for validation. We merge all classes into three classes: \texttt{car} (all different classes of vehicles), \texttt{pedestrian}, \texttt{cyclist}.

\noindent \textbf{\base:}

\underline{YOLOv4}~\cite{bochkovskiy2020yolov4,wang2020scaled} is the state-of-the-art one-stage object detector. We choose Scaled-YOLOv4-P5~\cite{wang2020scaled} in our evaluation. For MS COCO, we use the pre-trained model. For PASCAL VOC and KITTI, we fine-tune the model previously trained on MS COCO. 

\underline{Faster R-CNN}~\cite{ren2015faster} is a representative two-stage object detector. We use ResNet101-FPN as its backbone network. Image pre-processing and model architecture follows the original paper. We use pre-trained models for MS COCO and fine-tune pre-trained models for PASCAL VOC and KITTI detectors.

\underline{Perfect Clean Detector} (PCD) is a hypothetical object detector simulated with ground-truth annotations. PCD can always make correct detection in the clean setting but is assumed vulnerable to patch hiding attacks. This hypothetical object detector ablates the errors of \base and helps us better understand the behavior of \defense and \matcher.

\noindent \textbf{\defense:}

\underline{BagNet-33}~\cite{brendel2019approximating}, which has a 33$\times$33 small receptive field, is the backbone network of \defense. For PASCAL VOC and MS COCO, we zero-pad each image to a square and resize it to 416$\times$416 before feeding it to BagNet; for KITTI, we resize each image to 224$\times$740. We fine-tune a BagNet model that is pre-trained on ImageNet~\cite{deng2009imagenet}. To train an image classifier given a list of bounding boxes in the object detection dataset, we first map pixel-space bounding boxes to the feature space (details of box mapping are in Appendix~\ref{apx-mapping}). We then teach BagNet to make correct predictions on cropped feature maps by minimizing the cross-entropy loss between aggregated feature predictions and one-hot encoded label vectors. In addition, we aggregate all features outside any feature boxes as the ``negative" feature vector for the ``background" classification.

\noindent\textbf{Default Hyper-parameter:}

We will analyze the effect of different hyper-parameters in Section~\ref{sec-eval-hyp}. In our default setting, we use a square \textit{feature-space window} of size 8 and the DBSCAN clustering~\cite{ester1996density} with $\texttt{eps}=3,\texttt{min\_points}=24$ for different datasets.\footnote{A 416$\times$416 (or 224$\times$740) pixel image results in a 48$\times$48 (or 24$\times$89) feature map using our BagNet-33 implementation.} We set the default binarizing threshold to 32 for PASCAL VOC, 36 for MS COCO, and 11 for KITTI based on different model properties with different datasets.

\subsection{Metric}\label{sec-eval-metric}

\noindent\textbf{Clean Performance Metric:}

\underline{Precision and Recall.} We calculate the precision as TP/(TP+FP) and the recall as TP/(TP+FN). For the clean images without a false alert, we follow previous works~\cite{zhang2019towards,chiang2020detection} setting the IoU threshold $\tau=0.5$ and count TPs, FPs, FNs in the conventional manner. For images that have false alerts, we set TP and FP to zeros, and FN to the number of ground-truth objects since no bounding box is predicted. We note that conventional object detectors use a confidence threshold to filter out bounding boxes with low confidence values. As a result, different confidence thresholds will give different precision and recall values; we will plot the entire precision-recall curve to analyze the model performance.

\underline{Average Precision (AP).} To remove the dependence on the confidence threshold and to have a global view of model performance, we also report Average Prevision (AP) as one of evaluation metrics. We vary the confidence threshold from 0 to 1, record the precision and recall at different thresholds, and calculate AP as the averaged precision at different recall levels. This calculated AP can be considered as an approximation of the AUC (area under the curve) for the precision-recall curve. We note that AP is one of the most widely used performance metrics in object detection benchmark competitions~\cite{voc,coco,geiger2013vision} and research papers~\cite{redmon2017yolo9000,redmon2018yolov3,bochkovskiy2020yolov4,ren2015faster,lin2017focal,tan2020efficientdet,liu2016ssd,he2017mask}.

\underline{False Alert Rate (FAR@{0.x}).} FAR is defined as the percentage of clean \textit{images} on which \framework will trigger a false alert. The false alert is mainly caused by \textit{Clean Error 2} as discussed in Section~\ref{sec-defense}. We note that FAR is also closely tied to the confidence threshold of \base: a higher confidence threshold leads to fewer predicted bounding boxes, leading to more unexplained objectness, and finally higher FAR. We will report FAR at different recall levels for a global evaluation, and use FAR@0.x to denote FAR at a \textit{clean recall} of 0.x.

\noindent\textbf{Provable Robustness Metric:}

\underline{Certified Recall (CR@0.x).} We use {certified recall} as the robustness metric against patch hiding attacks. The {certified recall} is defined as the percentage of ground-truth \textit{objects} that have provable robustness against any patch hiding attack. Recall that an object has provable robustness when Algorithm~\ref{alg-dpg-analysis} (our provable analysis) returns \texttt{True}. Note that CR is also affected by the performance of \base (e.g., confidence threshold) since its prerequisite is the correct clean detection. We use CR@0.x to denote the certified recall at a \textit{clean recall} of 0.x.


\begin{figure}[t]
    \centering
    \includegraphics[width=0.9\linewidth]{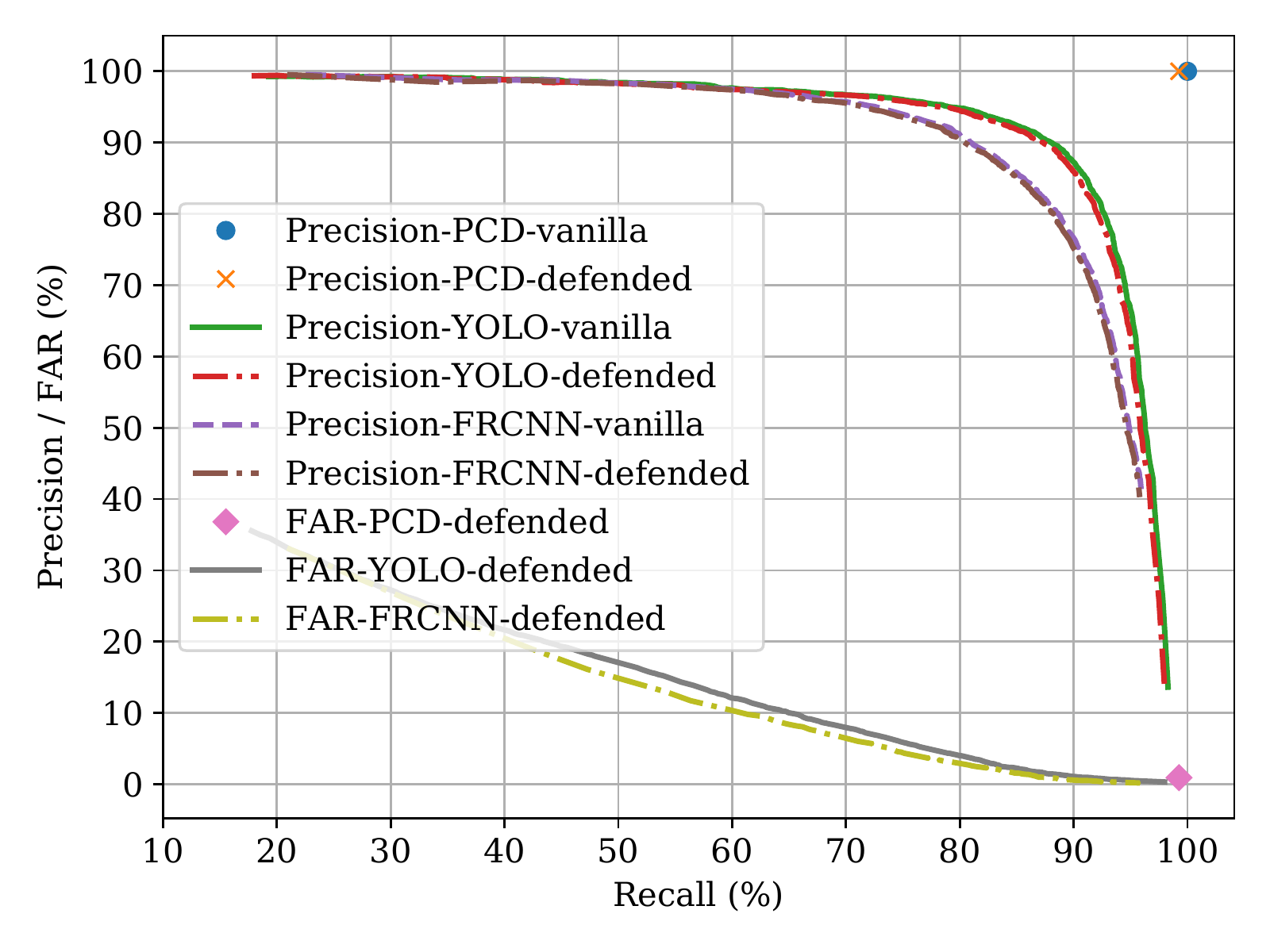}
    \caption{Clean performance of \framework on PASCAL VOC \textmd{(PCD -- perfect clean detector; YOLO -- YOLOv4; FRCNN -- Faster R-CNN; FAR -- False Alert Rate)}}
    \label{fig-clean-voc}
\end{figure}

\begin{table*}[t]
    \centering
    \caption{Provable robustness of \framework}  \label{tab-provable-dpg}
    \vspace{-1em}
    \resizebox{\linewidth}{!}
{ \small\begin{tabular}{c|c|c|c|c|c|c|c|c|c}
    \toprule
    & \multicolumn{3}{c|}{PASCAL VOC (CR@0.8)}&\multicolumn{3}{c|}{MS COCO (CR@0.6)}&\multicolumn{3}{c}{KITTI (CR@0.8)}\\

  & {far-patch}& {close-patch}& {over-patch} &  {far-patch}& {close-patch}& {over-patch} & {far-patch}& {close-patch}& {over-patch}\\
    \midrule
    Perfect clean detector&28.6\%&20.7\%&8.3\% &11.5\%&7.0\%&2.2\%&32.0\%&11.1\%&2.1\%\\
    YOLOv4&24.6\%&18.6\%&7.5\%&10.1\%&6.5\%&1.9\%&30.6\%&10.9\%&2.1\%\\
    Faster R-CNN&25.7\%&19.4\%&8.0\%&10.7\%&6.8\%&2.0\%&31.6\%&11.0\%&2.1\%\\
      \bottomrule
    \end{tabular}}
  
\end{table*}
\subsection{Clean Performance}\label{sec-eval-clean}
In this subsection, we evaluate the clean performance of \framework with three different base object detectors and three datasets. In Table~\ref{tab-clean-dpg}, we report AP of vanilla \base (vanilla AP), AP of \framework (defense AP), and False Alert Rate at a clean recall of 0.8 or 0.6 (FAR@0.8 or FAR@0.6). We also plot the precision-recall and FAR-recall curves for PASCAL VOC in Figure~\ref{fig-clean-voc}; similar plots for MS COCO and KITTI are in Appendix~\ref{apx-exp}.

\textbf{\framework has a low FAR and a high AP.} We can see from Table~\ref{tab-clean-dpg} that \framework has a low FAR of 0.9\% and a high AP of 99.3\% on PASCAL VOC when we use a perfect clean detector as \base. The result shows that \framework only has a minimal impact on the clean performance.

\textbf{\framework is highly compatible with different conventional object detectors.} From Table~\ref{tab-clean-dpg} and Figure~\ref{fig-clean-voc}, we can see that when we use YOLOv4 or Faster R-CNN as \base on PASCAL VOC, the clean AP, as well as the precision-recall curve of \framework, is close to that of its vanilla \base. These results show that \framework is highly compatible with different conventional object detectors.

\textbf{\framework works well across different datasets.} We can see that the observation of high clean performance is similar across three different datasets: \framework achieves a low FAR and a similar AP as the vanilla \base on PASCAL VOC, MS COCO, and KITTI (the precision-recall plots for MS COCO and KITTI are available in Appendix~\ref{apx-exp}). These similar results show that \framework is a general approach and can be used for both easier and more challenging detection tasks.

\textbf{Remark: a negligible cost of clean performance.} In this subsection, we have shown that \framework only incurs a negligible cost of clean performance (<1\% AP drop). This slight clean performance drop is worthwhile given the first provable robustness against patch hiding attacks (evaluated in the next subsection).

\subsection{Provable Robustness}\label{sec-eval-provable}
In this subsection, we first introduce the robustness evaluation setup and then report the provable robustness of our defense against any patch hiding attack within our threat model.

\textbf{Setup.} We use a 32$\times$32 adversarial pixel patch on the re-scaled and padded 416$\times$416 (or 224$\times$740) images to evaluate the provable robustness.\footnote{DPatch~\cite{liu2019dpatch} demonstrates that even a 20$\times$20 adversarial patch at the image corner can have a malicious effect. In Appendix~\ref{apx-obj-size}, we show that more than 15\% of PASCAL VOC objects and 44\% of MS COCO objects are smaller than a 32$\times$32 patch. We also provide robustness results for different patch sizes as well as visualizations in Appendix~\ref{apx-obj-size}.}
We consider \textit{all possible image locations} as candidate locations for the adversarial patch to evaluate the model robustness. We categorize our results into three categories depending on the distance between an object and the patch location. 
When the patch is totally over the object, we consider it as \emph{over-patch}. When the feature-space distance between patch and object is smaller than 8, we consider it as  \emph{close-patch}. The other patch locations are considered as \emph{far-patch}. For each set of patch locations and each object, we use Algorithm~\ref{alg-dpg-analysis} to determine the robustness. 
We note that the above algorithm already considers \emph{all possible adaptive attacks} (attacker strategies) at any location within our threat model. 
We use CR@0.x as the robustness metric. Given the large number of all possible patch locations, we only use a 500-image subset of the test/validation datasets for evaluation.

\begin{figure}[t]
    \centering
    \includegraphics[width=0.9\linewidth]{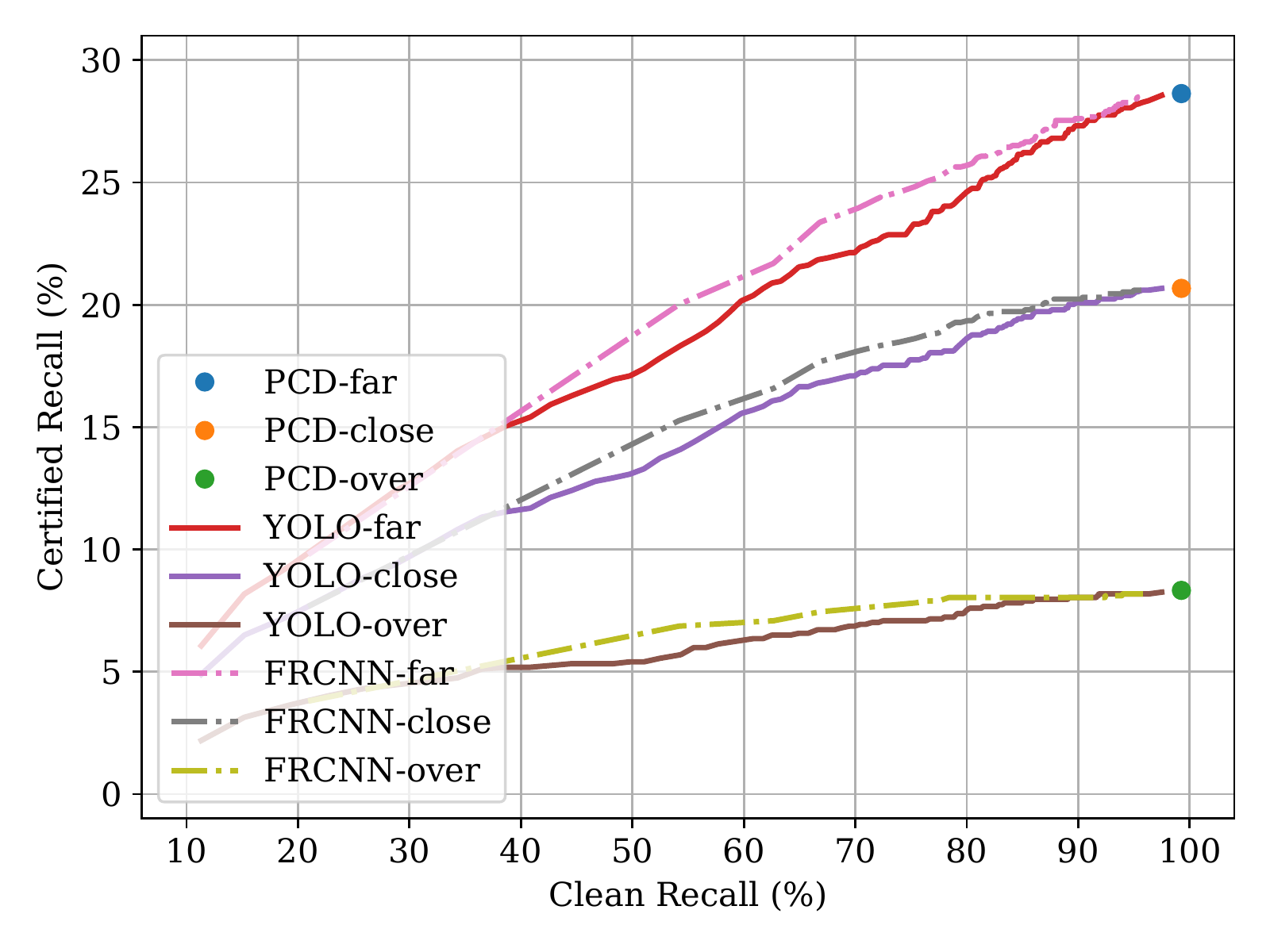}
    \caption{Provable robustness of \framework on PASCAL VOC \textmd{(PCD -- Perfect Clean Detector; YOLO -- YOLOv4; FRCNN -- Faster R-CNN) }}
    \label{fig-pro-voc}
\end{figure}

\textbf{\framework achieves the first non-trivial provable robustness against patch hiding attack.} We report the certified recall at a clean recall of 0.8 or 0.6 (CR@0.8 or CR@0.6) in Table~\ref{tab-provable-dpg}. As shown in Table~\ref{tab-provable-dpg}, when we use a Perfect Clean Detector, \framework can certify the robustness for 28.6\% of PASCAL VOC objects when the patch is far away from the object; which means \textit{no attacker} within our threat model can successfully attack these certified objects. We also plot the CR-recall curves for different detectors on the PASCAL VOC dataset in Figure~\ref{fig-pro-voc} (similar plots for MS COCO and KITTI are in Appendix~\ref{apx-exp}). The figure shows that the provable robustness improves as the clean recall increases, and the performance of YOLOv4 and Faster R-CNN is close to that of a perfect clean detector when the recall is close to one.

\textbf{\framework is especially effective when the patch is far away from the objects.}
From Table~\ref{tab-provable-dpg} and Figure~\ref{fig-pro-voc}, we can clearly see that the provable robustness of \framework is especially good when the patch gets far away from the object. This model behavior aligns with our intuition that a localized adversarial patch should only have a spatially constrained adversarial effect. Moreover, this observation shows that \framework has made the attack much more difficult: to have a chance to bypass \framework, the adversary has to put the patch close to or even over the victim object, which is not always feasible in real-world scenarios. We also note that in the over-patch threat model, we allow the patch to be anywhere over the object. This means that the patch can be placed over the most salient part of the object (e.g., the face of a person), and makes robust detection extremely difficult.

\textbf{\framework has better robustness for object classes with larger object sizes.} In Figure~\ref{fig-per-cls-voc-close}, we plot the certified recalls against the close-patch attacker (similar plots for far-patch and over-patch and for other datasets are in Appendix~\ref{apx-exp}) and average object sizes (reported as the percentage of image pixels) for all 20 classes of PASCAL VOC. As shown in the figure, the provable robustness of \framework is highly correlated with the object size: we have higher certified recalls for classes with larger object sizes. This is an expected behavior because it is hard for even humans to perfectly detect all small objects. Moreover, considering that missing a big nearby object is much more serious than missing a small distant object in real-world applications, we believe that \framework has strong application potential.

\textbf{Remark on absolute values of certified recalls.} Despite the first achieved provable robustness against patch hiding attacks, we acknowledge that DetectorGuard's absolute values of certified recalls are still limited. First, the notion of the certified recall itself is strong and conservative: we certify the robustness of an object only when no patch at any location within the threat model can succeed in the hiding attack (using any attack strategy including adaptive attacks). In practice, the attacker capability might be limited to a small number of patch locations. Second, we note that most objects in our three datasets are small (or even tiny) objects (we provide a quantitative analysis in Appendix~\ref{apx-obj-size}). Detecting small objects is already challenging in the clean setting and becomes even more difficult when an adversarial patch of comparable sizes is used. However, it is still notable that \framework is the first to achieve non-trivial and strong provable robustness against the patch hiding attacker for ``free", and that our approach works particularly well on certain object classes and threat models (e.g., the class ``dog" in the PASCAL VOC dataset has a \textasciitilde60\% certified recall for the close-patch threat model). We hope that the security community can build upon our work and further push the certified recall to a higher value.

\subsection{Detailed Analysis of \framework}\label{sec-eval-hyp}

In this subsection, we first perform a runtime analysis of \framework to show its small defense overhead. Next, we use a hypothetical perfect clean detector (PCD) and the PASCAL VOC dataset to analyze the performance of \framework under different hyper-parameter settings. Note that using PCD helps us to focus on the behavior of \defense and \matcher.

\textbf{Runtime analysis.} In Table~\ref{tab-runtime}, we report the average per-example runtime of each module in \framework. For \base, we report runtime for YOLOv4 (left) and Faster R-CNN (right). As shown in the table, \defense has a similar runtime as \base (or vanilla undefended object detectors), and \matcher only introduces a negligible overhead. If 2 GPUs are available, then  \base and \defense can run in parallel in \framework, and the overall runtime of \framework can be calculated as $t_{\text{\framework}} = \max(t_{\text{base}},t_{\text{predictor}})+t_{\text{explainer}}$ (reported in in the last column of Table~\ref{tab-runtime}), which is close to $t_{\text{base}}$.
Thus, \framework has a similar runtime performance as conventional object detectors in the setting of 2 GPUs. If only a single GPU is available, the runtime can be calculated as $t_{\text{base}}+t_{\text{predictor}}+t_{\text{explainer}}$, which leads to a roughly 2x slow-down.

\begin{figure}[t]
\centering
\includegraphics[width=\linewidth]{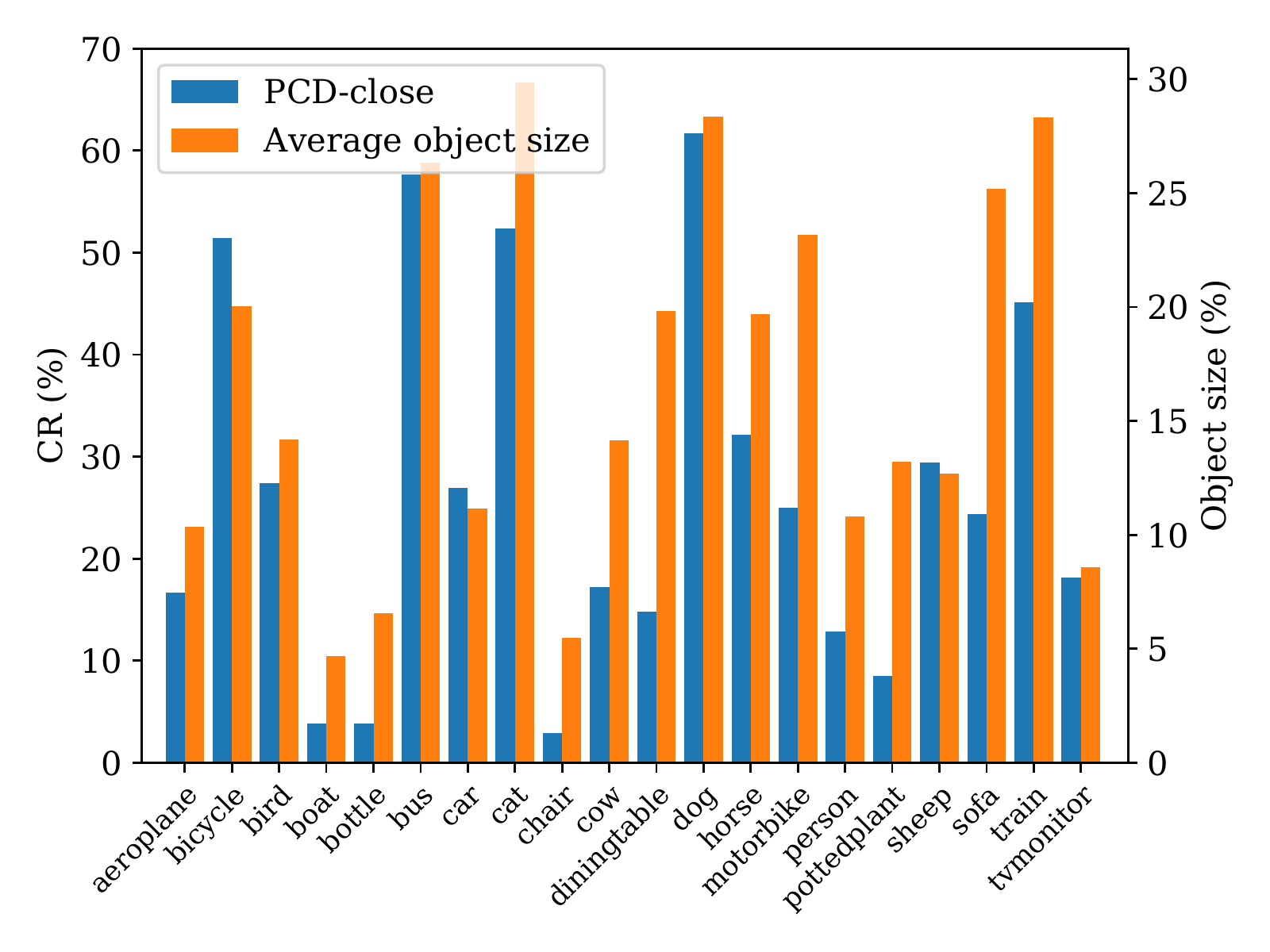}\\
\caption{Certified recalls and average object sizes for every class in PASCAL VOC \textmd{(reporting CR for close-patch; results for far-patch and over-patch are in Appendix~\ref{apx-exp})}}\label{fig-per-cls-voc-close}
\end{figure}

\begin{figure*}
\begin{minipage}[b]{0.25\linewidth}
\includegraphics[width=\linewidth]{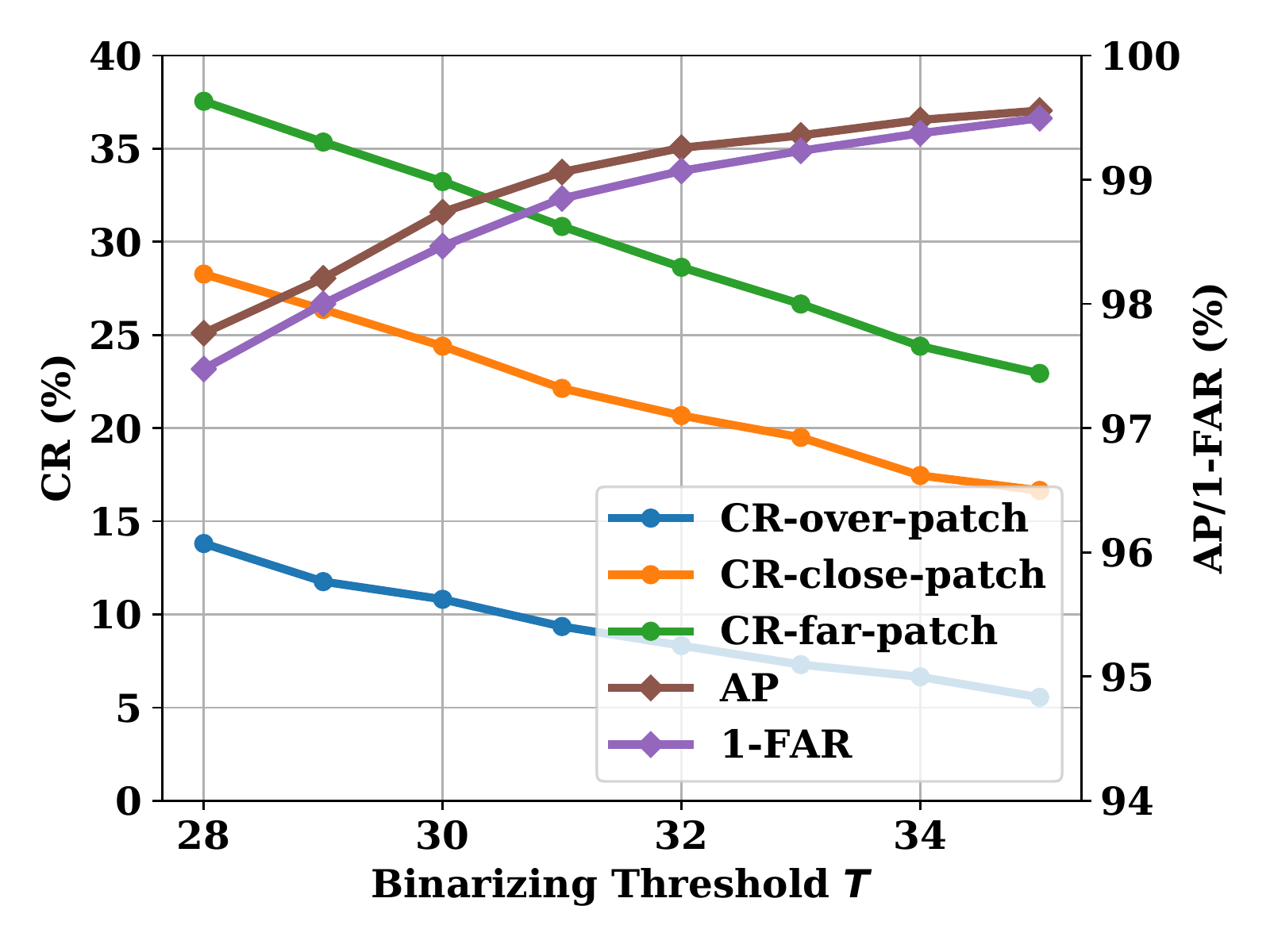}\\
\end{minipage}%
\begin{minipage}[b]{0.25\linewidth}
\includegraphics[width=\linewidth]{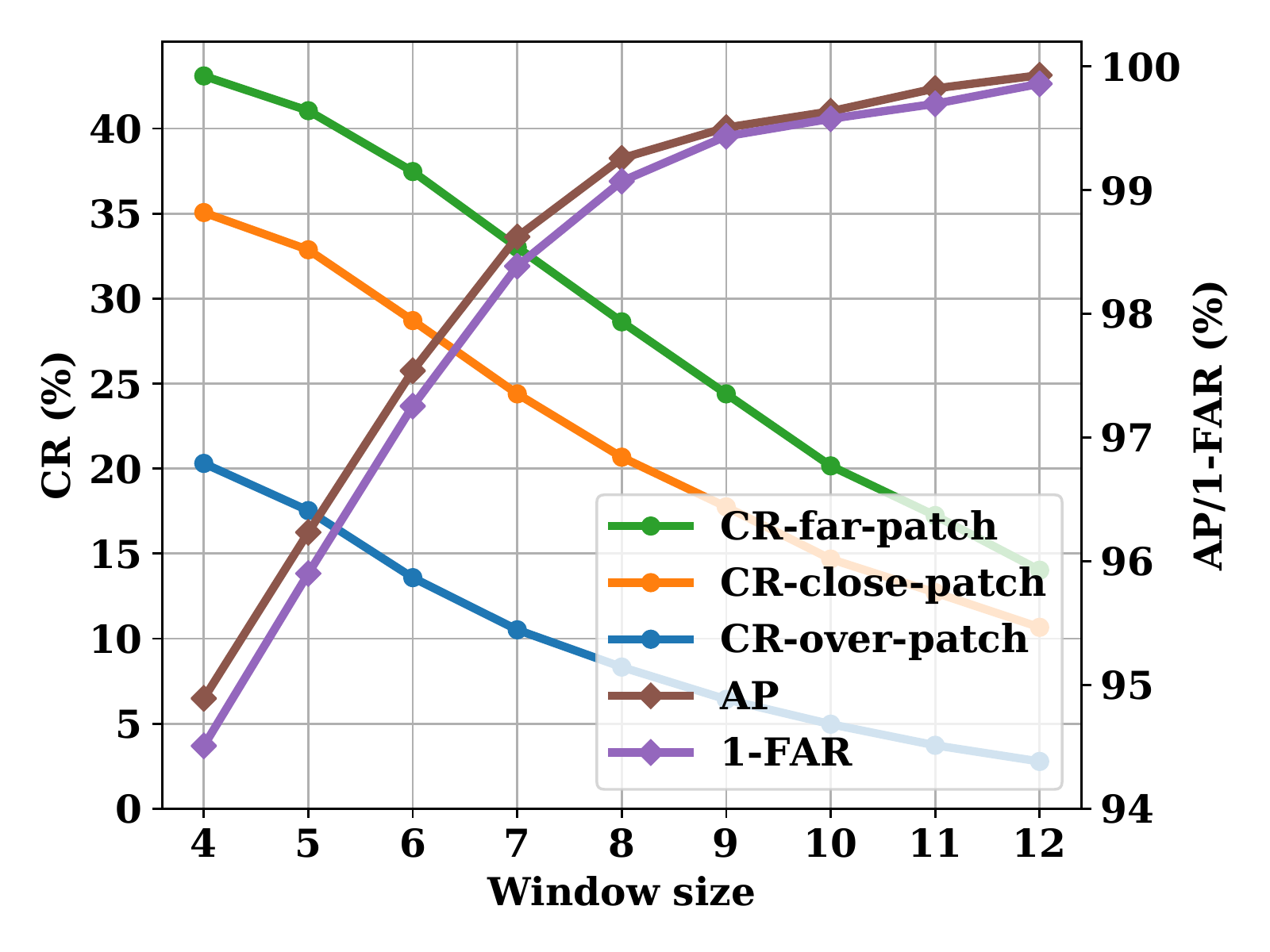}\\
\end{minipage}%
\begin{minipage}[b]{0.25\linewidth}
\includegraphics[width=\linewidth]{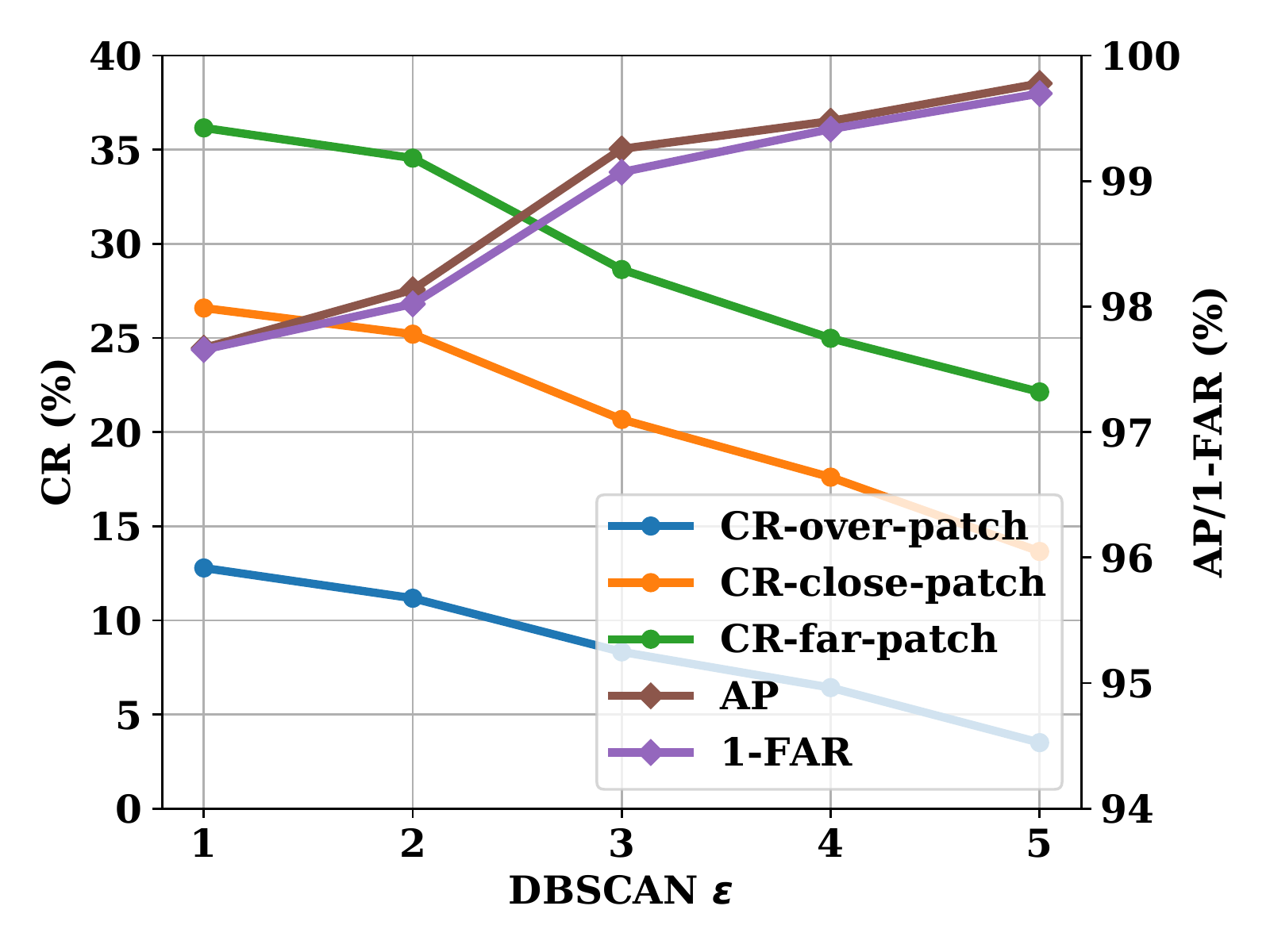}\\
\end{minipage}%
\begin{minipage}[b]{0.25\linewidth}
\includegraphics[width=\linewidth]{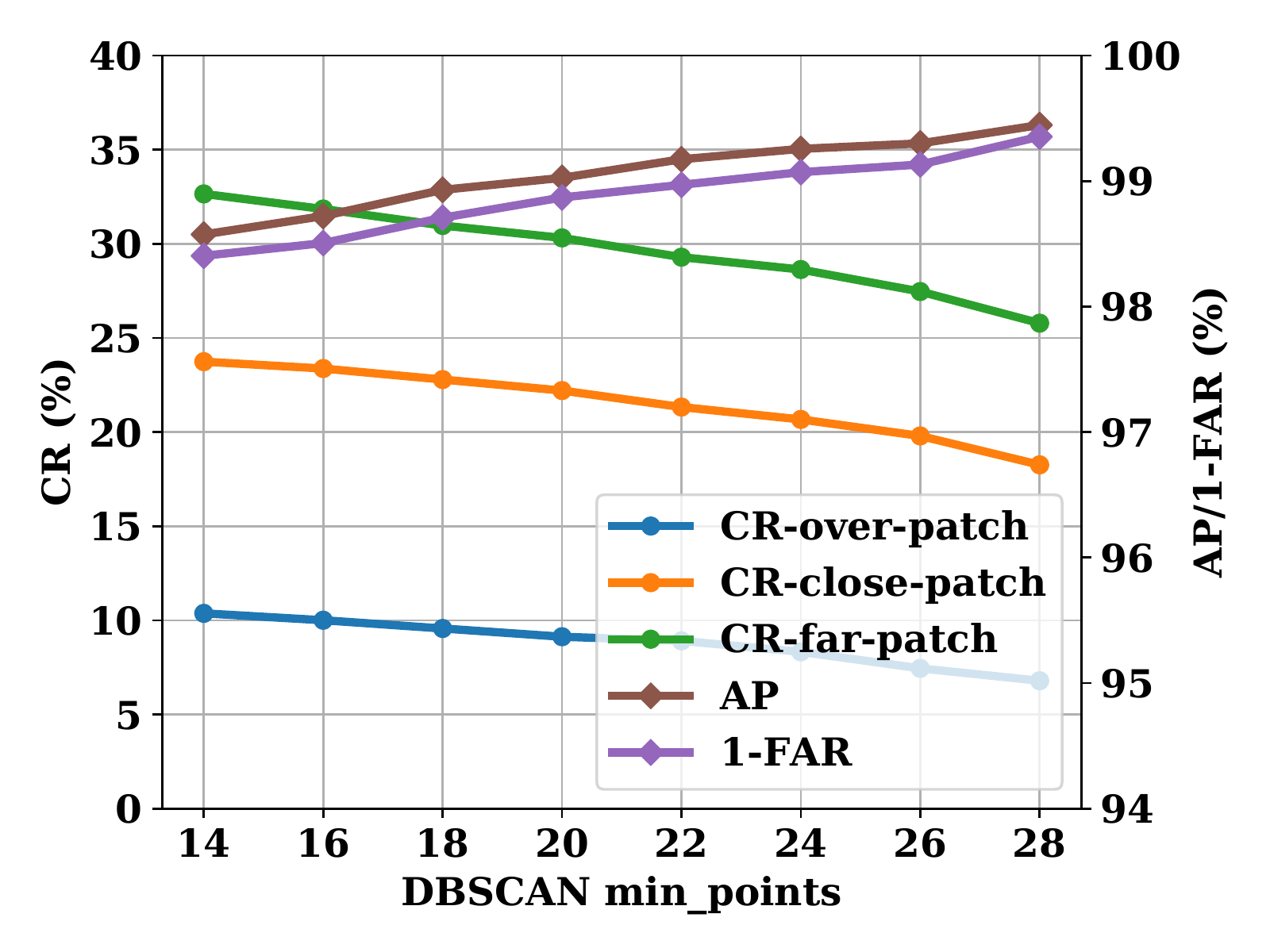}\\
\end{minipage}%
\caption{Effect of different hyper-parameters \textmd{(left to right: binarizing threshold, window size, DBSCAN $\epsilon$, DBSCAN \texttt{min\_points})}}\label{fig-hyp}
\end{figure*}

\begin{table}[t]
    \centering
    \caption{Per-example runtime breakdown}\label{tab-runtime}
    \vspace{-1em}
    \resizebox{\linewidth}{!}{
    \begin{tabular}{c|c|c|c|c}
    \toprule
        &\base &Objectness &Objectness &\framework\\
        
         &(YOLO / FRCNN)&Predictor &Explainer&(YOLO / FRCNN)\\
         \midrule
    VOC     & 54.0ms / 80.9ms & 48.5ms&0.2ms&54.2ms / 81.1ms\\
    COCO & 55.2ms / 79.4ms&65.2ms&0.3ms&65.5ms / 79.7ms\\
    KITTI &55.8ms / 69.7ms &44.6ms&0.4ms& 56.2ms / 70.1ms\\
    \bottomrule
    \end{tabular}}
\end{table}

\textbf{Effect of the binarizing threshold.} We first vary the binarizing threshold $T$ in $\textsc{ObjPredictor}(\cdot)$ to see how the model performance changes. For each threshold, we report CR for three patch threat models.  We also include AP and 1-FAR to understand the effect of different thresholds on clean performance. We report these results in the leftmost sub-figure in Figure~\ref{fig-hyp}. We can see that when the binarizing threshold is low, the CR is high because more objectness is retained after the binarization. However, more objectness also makes it more likely to trigger a false alert in the clean setting, and we can see both AP and 1-FAR are affected greatly as we decrease the threshold $T$. Therefore, we need to balance the trade-off between clean performance and provable robustness. In our default parameter setting, we set $T=32$ to have a FAR lower than 1\% while maintaining decent provable robustness.

\textbf{Effect of window size.} We study the effect of different window sizes in the second sub-figure in Figure~\ref{fig-hyp}. The figure demonstrates a similar trade-off between provable robustness and clean performance. As we increase the window size, each window receives more information from the input and therefore the clean performance (AP and 1-FAR) improves. However, a large window size increases the number of windows that are affected by the small adversarial patch, and the provable robustness drops. In our default setting, we set the window size to 8 to have a low FAR and good CR.

\textbf{Effect of DBSCAN parameters.} We also analyze the effect of DBSCAN parameters in $\textsc{DetCluster}(\cdot)$. DBSCAN has two parameters $\epsilon$ and \texttt{min\_points}. A point is labeled as a core point when there are at least \texttt{min\_points} points within distance $\epsilon$ of it; all core points and their neighbors will be labeled as clusters. We plot the effect of $\epsilon$ and \texttt{min\_points} in the right two sub-figures in Figure~\ref{fig-hyp}. As we increase $\epsilon$ or \texttt{min\_points}, it becomes more difficult to form clusters. As a result, the clean performance improves because of fewer detected clusters and fewer false alerts. However, the provable robustness (CR) drops due to fewer detected clusters in the worst-case objectness map. 

\section{Discussion}\label{sec-discussion}
In this section, we discuss the future work directions and defense extension of \framework.


\subsection{Future Work}
\textbf{Robust object detection without abstention.} In this paper, we have tailored \framework for attack detection: when no attack is detected, the model uses conventional object detectors for predictions; when an attack is detected, the model alerts and abstains from making predictions. This type of defense is useful in application scenarios like autonomous vehicles which can give the control back to the driver upon detecting an attack. However, the most desirable notion of robustness is to always make correct predictions without any abstention. How to extend \framework for robust object detection without abstention is an interesting direction of future work.

\textbf{Better robust image classifier.} In \framework, we use the key principals introduced in Section~\ref{sec-prelim-classifier} to co-design the provably robust image classifier and \defense. However, the imperfection of the adapted robust image classifier still limits the performance of \framework detector. Although we can optimize for few \textit{Clean Error 2} and tolerate a potentially high \textit{Clean Error 3}, as discussed in Section~\ref{sec-matcher}, a high \textit{Clean Error 3} results in a limited certified recall of \framework in the adversarial setting. We note that \framework is a general framework that is \textit{compatible with any conventional object detector and (principles for building) provably robust image classifier}, and we expect a boost in \framework's performance given any future advance in robust image classification research.

\textbf{Extension to the video setting.} In this paper, we focus on object detection in the single-frame setting. It is interesting to extend our \framework to multiple-frame video setting. We expect that the temporal information could be useful for robustness. Moreover, we could perform the defense on a subset of frames to reduce defense overhead and minimize false alerts in the clean setting.

\begin{algorithm}[t]
\caption{Auxiliary Predictor of \framework}\label{alg-aux}
\begin{algorithmic}[1]
\Procedure{AuxPredictor}{$\mathcal{D},\mathbf{x}$}
\State $\hat{\mathcal{D}} \gets \{\}$
\For{$i\in\{0,1,\cdots,num\_detection-1\}$}
\State $x_{\min},y_{\min},x_{\max},y_{\max},l \gets \mathbf{b} \gets \mathcal{D}[i]$
\State $l^\prime \gets \textsc{AuxClassier}(\mathbf{x}[x_{\min}:x_{\max},y_{\min}:y_{\max}])$\label{ln-reclassify}
\If{$l^\prime\neq\text{``background"}$}
\State $\hat{\mathcal{D}} \gets \hat{\mathcal{D}}\bigcup \{(x_{\min},y_{\min},x_{\max},y_{\max},l^\prime)\}$\label{ln-retain}
\EndIf
\EndFor
\State\Return $\hat{\mathcal{D}}$
\EndProcedure
\end{algorithmic}
\end{algorithm}
\subsection{Defense Extension.} 
In this paper, we propose \framework as a provably robust defense against \textit{patch hiding, or false-negative (FN), attacks}. Here, we discuss how to extend \framework for defending against \textit{false-positive (FP)} attacks. The FP attack aims to introduce incorrect bounding boxes in the predictions of detectors to increase FP. We can consider FP attacks as a misclassification problem (i.e., a bounding box is given an incorrect label), and thus this attack can be mitigated if we have a robust auxiliary image classifier to re-classify the detected bounding boxes. If the auxiliary classifier predicts a different label, we consider it as an FP attack and can easily correct or filter out the FP boxes. 

We provide the pseudocode for using an auxiliary classifier (can be any robust image classifier) against FP attacks in Algorithm~\ref{alg-aux}. The algorithm re-classifies each detected bounding box in $\mathcal{D}$ as label $l^\prime$ (Line~\ref{ln-reclassify}). We trust the robust auxiliary classifier and add bounding boxes with non-background labels to $\hat{\mathcal{D}}$  (Line~\ref{ln-retain}). Finally, the algorithm returns the filtered detection $\hat{\mathcal{D}}$, and we can replace the $\mathcal{D}$ in Line~\ref{ln-return} of Algorithm~\ref{alg-dpg} with $\hat{\mathcal{D}}$ to extend the original \framework design. We note that when the patch is not present in the FP box or it only occupies a small portion of the FP box (the FP box is large), the auxiliary classifier is likely to correctly predict ``background" since there is no or few corrupted pixel within the FP box. When the patch occupies a large portion of the FP box, we might finally output a bounding box that has a large IoU with the patch. In this case, our object detector correctly locates the adversarial patch but predicts a wrong label. This is an acceptable outcome because the class label for patches is undefined. Therefore, Algorithm~\ref{alg-aux} presents a strong empirical defense for FP attacks.

\section{Related Work}\label{sec-related-work}
\subsection{Adversarial Patch Attacks}
\textbf{Image Classification.}
Unlike most adversarial examples that introduce a global perturbation with a $L_p$-norm constraint, localized adversarial patch attacks allow the attacker to introduce arbitrary perturbations within a restricted region. Brown et al.~\cite{brown2017adversarial} introduced the first adversarial patch attack against image classifiers. They successfully realized a real-world attack by attaching a patch to the victim object. Follow-up papers studied variants of localized attacks against image classifiers with different threat models~\cite{karmon2018lavan,liu2019perceptual,liu2020bias}.

\textbf{Object Detection.}
Localized patch attacks against object detection have also received much attention in the past few years. Liu et al.~\cite{liu2019dpatch} proposed DPatch as the first patch attack against object detectors in the digital domain. 
Lu et al.~\cite{lu2017adversarial}, Chen et al.~\cite{chen2018shapeshifter}, Eykholt et al.~\cite{eykholt2018physical}, and Zhao et al.~\cite{zhao2019seeing} proposed different physical attacks against object detectors for traffic sign recognition. Thys et al.~\cite{thys2019fooling} proposed to use a rigid physical patch to evade human detection while Xu et al.~\cite{xu2020adversarial} and Wu et al.~\cite{wu2019making} generated successfully non-rigid perturbations on T-shirt to evade detection. 

\subsection{Defenses against Adversarial Patches}
\noindent\textbf{Image Classification.} Digital Watermark (DW)~\cite{hayes2018visible} and Local Gradient Smoothing (LGS)~\cite{naseer2019local} were the first two heuristic defenses against adversarial patch attacks. Unfortunately, these defenses are vulnerable to an adaptive attacker with the knowledge of the defense. A few certified defenses~\cite{chiang2020certified,zhang2020clipped,levine2020randomized,mccoyd2020minority,xiang2020patchguard,metzen2021efficient,xiang2021patchguardpp} have been proposed to provide strong provable robustness guarantee against any adaptive attacker. Notably, PatchGuard~\cite{xiang2020patchguard} introduces two key principles of small receptive fields and secure aggregation and achieves state-of-the-art defense performance for image classification. In contrast, \framework aims to adapt robust image classifiers for the more challenging robust object detection task.  

\textbf{Object Detection.} How to secure object detection is a much less studied area due to the complexity of this task. Saha et al.~\cite{saha2020role} demonstrated that YOLOv2~\cite{redmon2017yolo9000} were vulnerable to adversarial patches because detectors were using spatial context for their predictions, and then proposed a new training loss to limit the usage of context information. To the best of our knowledge, this is the only prior attempt to secure object detectors from patch attacks. However, this defense is based on heuristics and thus does not have any provable robustness. Moreover, the attack and defense are targeted at YOLOv2 only, and it is unclear if the defense generalizes to other detectors. In contrast, our defense has provable robustness against any patch hiding attack considered in our threat model and is compatible with any state-of-the-art object detectors.

\subsection{Other Adversarial Example Attacks and Defenses}

\textbf{Image Classification.} Attacks and defenses for classic $L_p$-bounded adversarial examples~\cite{szegedy2013intriguing,goodfellow2014explaining,carlini2017towards} have been extensively studied. Many empirical defenses~\cite{papernot2016distillation,xu2017feature,meng2017magnet,metzen2017detecting,madry2017towards} were proposed to mitigate the threat of adversarial examples, but were later found vulnerable to adaptive attackers~\cite{carlini2017adversarial,athalye2018obfuscated,tramer2020adaptive}. The fragility of the empirical defenses has inspired certified defenses that are robust to any attacker considered in the threat model~\cite{raghunathan2018certified,wong2017provable,lecuyer2019certified,cohen2019certified,salman2019provably,gowal2018effectiveness,mirman2018differentiable}. 
We refer interested readers to survey papers~\cite{papernot2018sok,yuan2019adversarial} for more details.

\textbf{Object Detection.} Global perturbations against object detectors were first studied by Xie et al.~\cite{xie2017adversarial} and followed by researchers~\cite{wang2019daedalus,wei2019transferable} in different applications. Defenses against global $L_p$ perturbations are also very challenging. Zhang et al.~\cite{zhang2019towards} used adversarial training (AT) to improve empirical model robustness while Chiang et al.~\cite{chiang2020detection} proposed the use of randomized median smoothing (RMS) for building certifiably robust object detectors. Both defenses suffer from poor clean performance while \framework's clean performance is close to state-of-the-art object detectors. On PASCAL VOC, AT incurs a \textasciitilde26\% clean AP drop while \framework only incurs a <1\% drop.\footnote{RMS~\cite{chiang2020detection} did not report results for PASCAL VOC.} On MS COCO, both AT and RMS have a clean AP drop that is larger than 10\% while ours is smaller than 1\%. We note that we do not compare robustness performance because these two works focus on global perturbations and are orthogonal to the objective of this paper. 

We note that it is possible to extend our robust objectness predictor design and objectness explaining strategy to mitigate attacks that use global perturbations with a bounded $L_\infty$ norm (if we have a robust image classifier against $L_\infty$ perturbations). We leave this as a future work direction.

\section{Conclusion}
In this paper, we propose \framework, the first general framework for building provably robust object detectors against patch hiding attacks. \framework introduces a general approach to adapt robust image classifiers for robust object detection using an objectness explaining strategy. Our evaluation on the PASCAL VOC, MS COCO, and KITTI datasets demonstrates that \framework achieves the first provable robustness against any patch hiding attacker within the threat model and also has a high clean performance that is close to state-of-the-art detectors.

\begin{acks}
We are grateful to Gagandeep Singh for shepherding the paper and anonymous reviewers at CCS 2021 for their valuable feedback. We would also like to thank Vikash Sehwag, Shawn Shan, Sihui Dai, Alexander Valtchanov, Ruiheng Chang, Jiachen Sun, and researchers at Intel Labs for helpful discussions on the project and insightful comments on the paper draft. This work was supported in part by the National Science Foundation under grants CNS-1553437 and CNS-1704105, the ARL’s Army Artificial Intelligence Innovation Institute (A2I2), the Office of Naval Research Young Investigator Award, Schmidt DataX award, and Princeton E-ffiliates Award.

\end{acks}

\bibliographystyle{ACM-Reference-Format}
\bibliography{bib}
\newpage

\appendix

\begin{figure*}[]
\centering
\begin{minipage}[b]{0.3\linewidth}
\includegraphics[width=\linewidth]{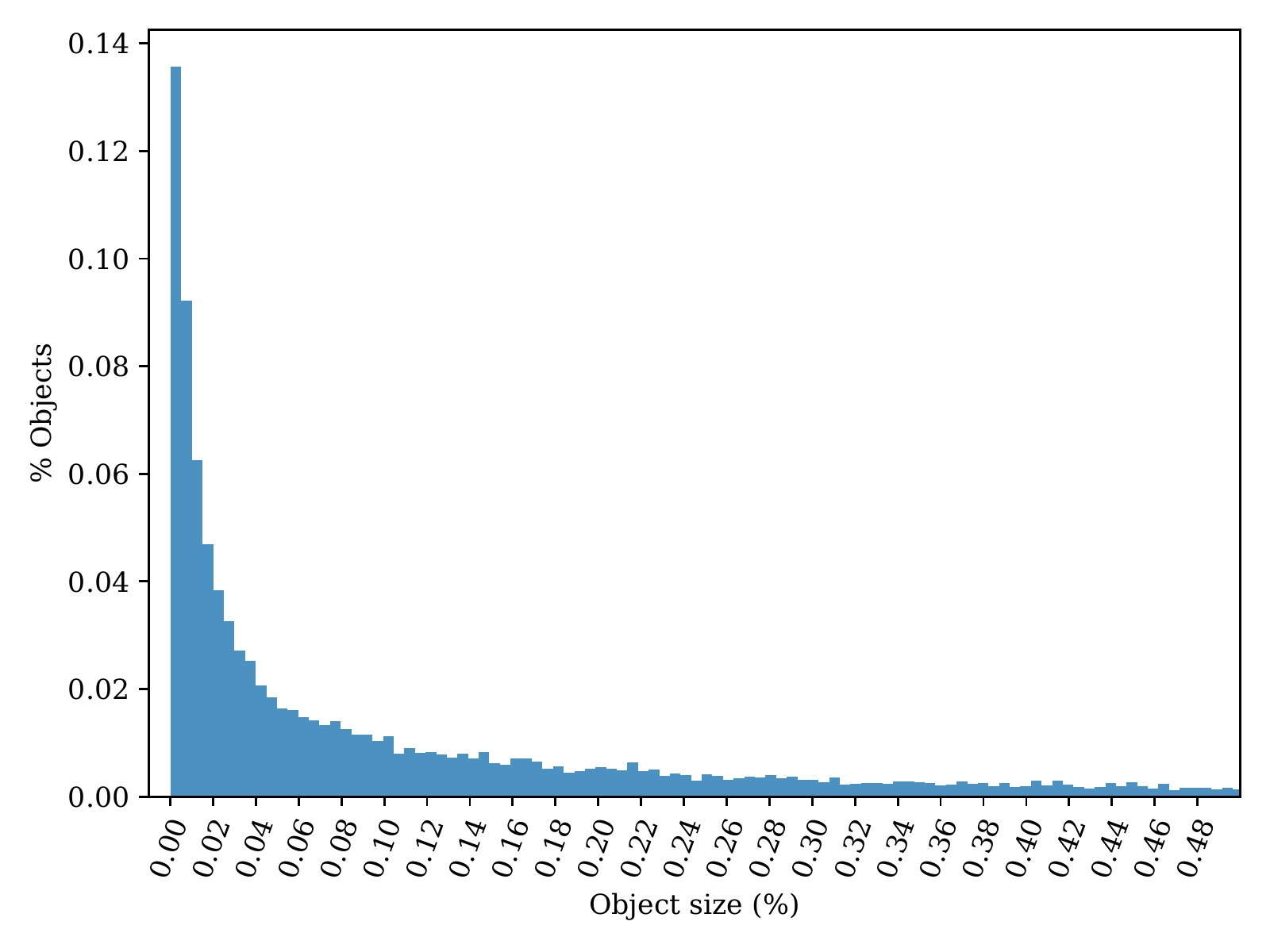}\\
\end{minipage}%
\begin{minipage}[b]{0.3\linewidth}
\includegraphics[width=\linewidth]{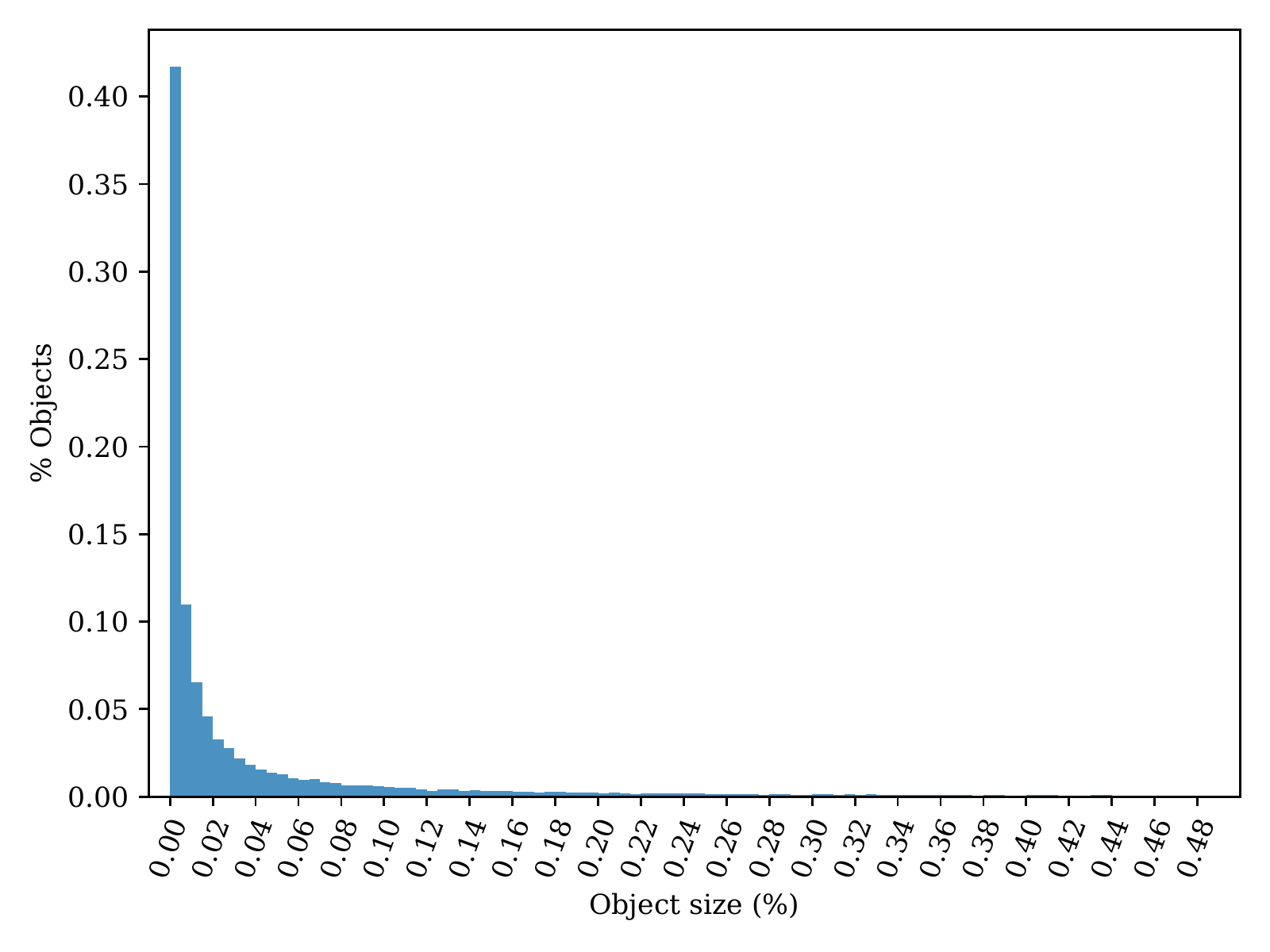}\\
\end{minipage}%
\begin{minipage}[b]{0.3\linewidth}
\includegraphics[width=\linewidth]{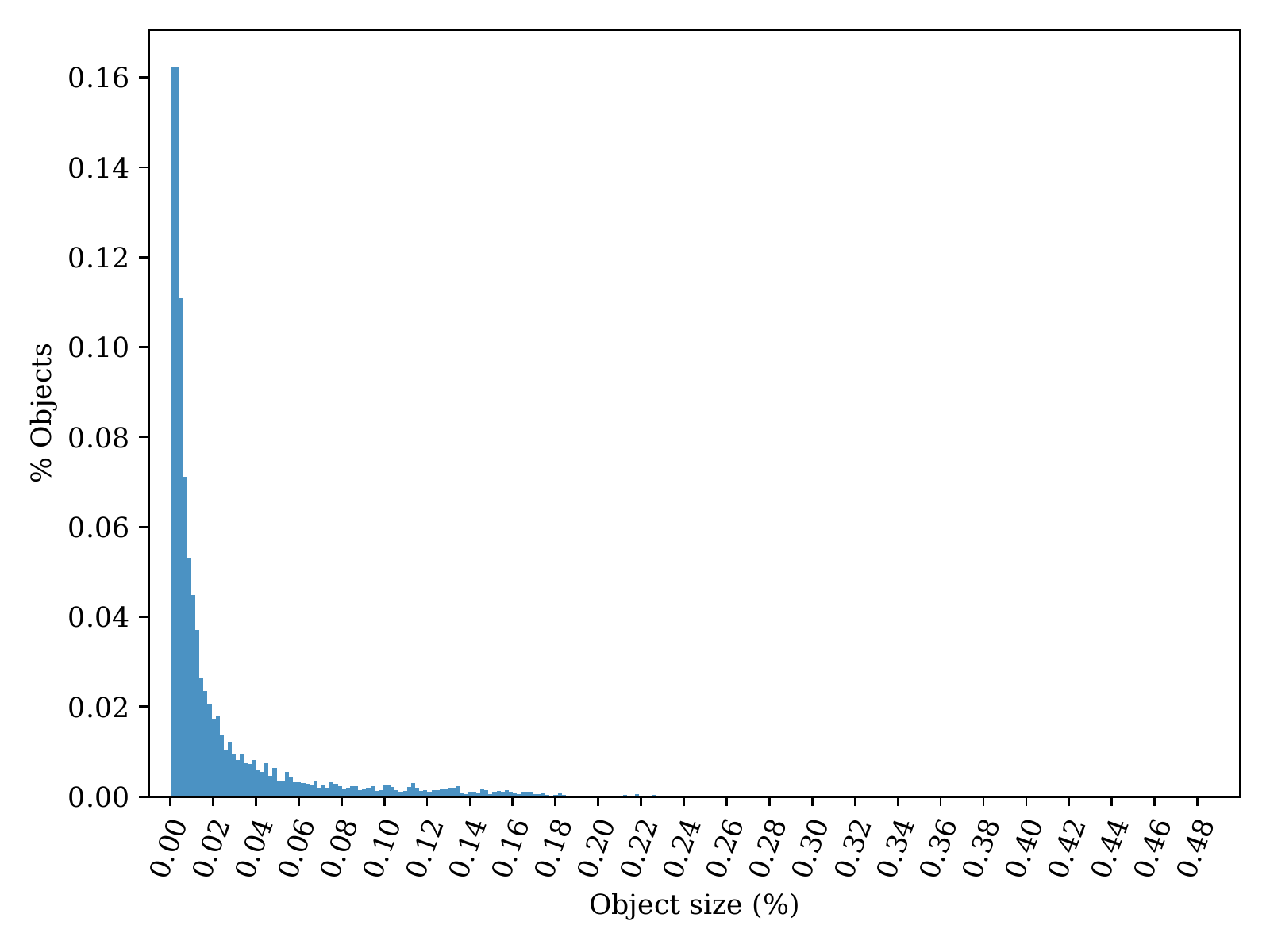}\\
\end{minipage}%
\caption{Histogram of object sizes (left: PASCAL VOC; middle: MS COCO; right: KITTI)}\label{fig-hist-obj-size}
\end{figure*}

\section{Object Size and Patch Size}\label{apx-obj-size}
Recall that in Section~\ref{sec-eval-provable}, we use a 32$\times$32 patch on 416$\times$416 (or 224$\times$740) images to evaluate the provable robustness. In this section, we provide additional details of object sizes and patch sizes in PASCAL VOC, MS COCO, and KITTI datasets.

\noindent \textbf{Small objects are the majority of all three datasets.} In Figure~\ref{fig-hist-obj-size}, we plot the histogram of object sizes (in the percentage of pixels) in the test/validation sets of PASCAL VOC, MS COCO, and KITTI. As shown in the plots, small, or even tiny, objects are the majority of three datasets. A 32$\times$32 patch takes up 0.6\% pixels of a 416$\times$416 (or 224$\times$740) image, and our further analysis shows that 15.2\% objects of PASCAL VOC are smaller than 0.6\% image pixels; 44.5\% of MS COCO and 44.6\% KITTI objects are smaller than 0.6\%. Moreover, more than 36.5\% of PASCAL VOC objects, more than 66.3\% of MS COCO objects, and 75.9\% KITTI objects are smaller than a 64$\times$64 square. These numbers explain why the absolute numbers of certified recall in Table~\ref{tab-provable-dpg} are low. In Figure~\ref{fig-patch-obj-visu}, we further provide visualization of a 32$\times$32 patch on the 416$\times$416 image to demonstrate the challenge of perfect robust detection even when a small patch is presented. In the left two examples, the person and the cow are completely blocked by the adversarial patch and thus are unrecognizable. In the rightmost example, the head of the dog is patched and it is even hard for humans to determine if it is a dog or cat.

\noindent\textbf{Additional evaluation results for different patch sizes.} In Figure~\ref{fig-p-size}, we vary the patch size to see how the provable robustness is affected given different attacker capabilities (i.e, patch sizes). If we consider a smaller patch of 8$\times$8 pixels, we can have a 2.0\% higher CR for close-patch, and a 2.8\% higher CR for over-patch compared with our CRs for a 32$\times$32 patch. Furthermore, we note that in the far-patch model, the patch size has a limited influence on the provable robustness. From Figure~\ref{fig-p-size}, We can also see that the CR decreases as the patch size increases. This analysis demonstrates the limit of \framework as well the challenge of robust object detection with larger patch sizes. We aim to push this limit further in our future work.

\begin{figure}[t]
    \centering
    \includegraphics[width=\linewidth]{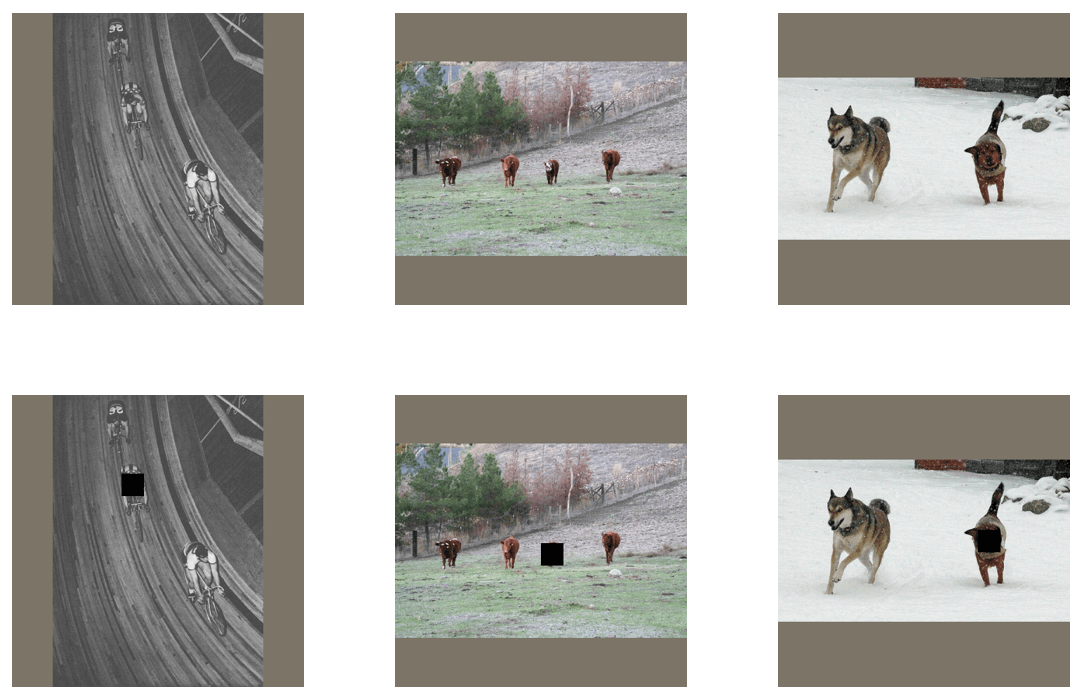}
    \caption{Visualization of patches on small objects (upper: original 416$\times$416 images; lower: images with a 32$\times$32 black patch)}
    \label{fig-patch-obj-visu}
\end{figure}

\begin{figure}[t]
    \centering
    \includegraphics[width=0.8\linewidth]{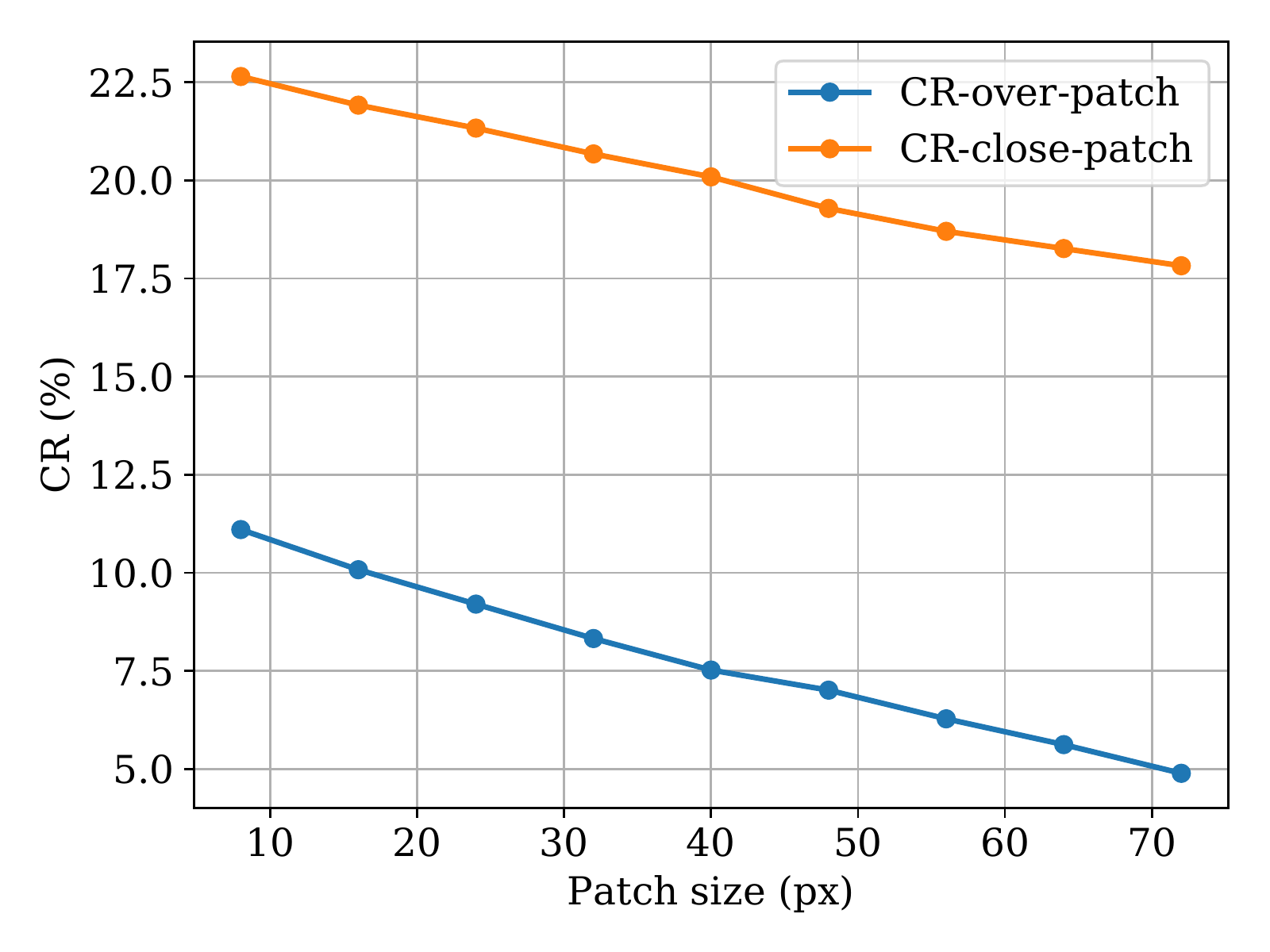}
    \caption{Effect of patch size on provable robustness of \framework with a perfect clean detector}
    \label{fig-p-size}
\end{figure}

\begin{table*}[t]
    \centering
    \caption{Comparison between masking-based and clipping-based defenses of \framework (using a perfect clean detector)}
    { \begin{tabular}{c|c|c|c|c|c|c|c|c|c|c}
    \toprule
    & \multicolumn{5}{c|}{clipping-based \framework}&\multicolumn{5}{c}{masking-based \framework}\\
    & AP &FAR & CR-far &CR-close & CR-over &AP  &FAR & CR-far &CR-close & CR-over \\
    \midrule
    PASCAL VOC & 99.3\%&0.9\%&28.6\%&20.7\%&8.3\%&98.9\% &1.1\%&26.2\%&17.2\%&4.7\%\\
    
    MS COCO& 99.0\% &1.2\% &11.5\%&7.0\%&2.2\% &98.7\%&1.4\%&11.4\%&5.4\%&1.6\%\\
    KITTI&99.0\%&1.5\%&31.6\%&11.0\%&2.1\% &99.4\%&1.1\%&17.4\%&4.9\%&1.2\%\\

      \bottomrule
    \end{tabular}}
    \label{tab-clipping-dpg}
\end{table*}

\section{Additional Discussion of Robust Classifier Implementation}\label{apx-robust-classifier}

As introduced in Section~\ref{sec-prelim-classifier}, state-of-the-art provable robust image classifiers~\cite{xiang2020patchguard,metzen2021efficient} 1) use DNN with small receptive fields to bound the number of corrupted features and then 2) do secure aggregation on the partially corrupted feature map for robust classification. In \framework, we choose BagNet~\cite{brendel2019approximating} for small receptive fields and feature clipping for secure aggregation. In this section, we provide additional details of BagNet and clipping aggregation. Furthermore, we discuss alternative aggregation mechanisms and implement robust masking~\cite{xiang2020patchguard} to demonstrate the generality of our \framework framework.

\noindent\textbf{BagNet~\cite{brendel2019approximating}.} BagNet was originally proposed for interpretable machine learning. It inherits the high-level architecture of ResNet-50~\cite{he2016deep} and replaces 3x3 convolution kernels with 1x1 ones to reduce the receptive field size. The authors designed three BagNet architectures with a small receptive field of 9$\times$9, 17$\times$17, and 33$\times$33, in contrast to ResNet-50 having a receptive field of 483$\times$483. Brendel and Bethge showed that BagNet-17 with a 17$\times$17 receptive field can achieve a similar top-5 accuracy as AlexNet~\cite{brendel2019approximating}. In recent works~\cite{zhang2020clipped,xiang2020patchguard} on adversarial patch defense, BagNet has been adopted to bound the number of corrupted features to achieve robustness. 

\noindent\textbf{Clipping.} In addition to the use of CNNs with small receptive fields, we also need a secure aggregation mechanism to ensure that a small number of corrupted features only have a limited influence on the final prediction/classification. Recall that in our provable analysis (Algorithm~\ref{alg-dpg-analysis};  Section~\ref{sec-robustness-analysis}), we need the lower bound of classification logits to reason about the worst-case objectness map output. In order to impose such a lower bound, we clip all feature values into $[0,\infty]$ such that an adversarial patch cannot decrease the values of object classes significantly. It is easy to calculate the lower bound of classification logits: we only need to zero out all features within the patch location(s) and aggregate the remaining features.

\noindent\textbf{Alternative aggregation.} We propose \framework as a general framework that is compatible with any provably robust image classification technique. To further support this claim, we implement \defense using a PatchGuard classifier with robust masking secure aggregation~\cite{xiang2020patchguard}, which achieves the best clean classification accuracy and provable robust classification accuracy on high-resolution ImageNet~\cite{deng2009imagenet} dataset. We compare the performance of defenses with clipping-base and robust-making-based secure aggregation in Table~\ref{tab-clipping-dpg}. As we can see from the table, two defenses achieve high clean performance and non-trivial provable robustness, demonstrating that \framework is compatible with different provably robust image classifiers. We note that we do not choose robust masking in our main evaluation because 1) it has a looser lower bound compared with clipping while 2) it introduces a slightly higher computation overhead. Furthermore, robust masking of PatchGuard only has limited robustness against the multiple-patch attacker. In contrast, as demonstrated in Appendix~\ref{apx-multiple-patch}, the clipping-based \framework can handle multiple patches.

\section{Discussion on Multiple Patches}\label{apx-multiple-patch}
In our main body, we focus on the one-patch threat model because building a high-performance provably robust object detector against a single-patch attacker is an unresolved and open research question. In this section, we discuss \framework's robustness against multiple patches.

\noindent \textbf{Quantitative analysis of clipping-based \framework against multiple patches.} One advantage of the clipping-based robust classifier is its robustness against multiple patches. As long as the sub-procedure $\textsc{RCH-PA}(\cdot)$ of the clipping-based robust classifier can return non-trivial bounds of classification logits, we can directly plug the sub-procedure into our Algorithm~\ref{alg-dpg-analysis} to analyze the robustness against multiple patches.

We note that despite the theoretical possibility to defend against attacks with multiple patches, its quantitative evaluation for provable robustness is extremely expensive due to the large number of all possible combinations of multiple patch locations.
Consider a 32$\times$32 patch on a 416$\times$416 image. There are 148k possible patch locations (or 1.6k feature-space locations). If we are using 2 patches of the same size, the number of all location combinations becomes higher than $10^{10}$ (or 1.4M feature-space location combinations)! 

In order to provide a proof-of-concept for defense against multiple patches, we perform an evaluation on 50 PASCAL VOC images using a subset of patch locations (1/16 of all location combinations). The results are reported in Table~\ref{tab-multiple-patch}. As shown in the table, \framework is able to defend against multiple patches. Moreover, if we compare provable robustness against one 32$\times$32 (1024 px) and two 24$\times$24 patches (1152 px), which have a similar number of pixels, we can find that using two smaller patches (two 24$\times$24 patches) is only more effective for over-patch threat model but not for far-patch and close-patch threat models. This observation leads to the following remark.

\noindent\textbf{Remark: multiple patches need to be close to each other and the victim object for a stronger malicious effect.} Unlike image classification where the classifier makes predictions based on all image pixels (or extracted features), an object detector predicts each object largely based on the pixels (or features) around the object. As a result, patches that are far away from the object only have a limited malicious effect, and this claim is supported by our evaluation results in Section~\ref{sec-eval-provable} (i.e., \framework is more effective against the far-patch threat model). Therefore, multiple patches should be close to the victim object and hence close to each other for a more effective attack. In this case, the multiple-patch threat model becomes similar to the one-patch model since patches are close to each other and can merge into one single patch. That is, we can use one single patch of a larger size to cover all perturbations in multiple small patches.
\begin{table}[t]
    \centering
    \caption{Provable robustness (CR) of \framework (using a perfect clean detector) against multiple patches (evaluated on 50 PASCAL VOC images with a subset of patch locations)}
    \resizebox{\linewidth}{!}{
    \begin{tabular}{c|c|c|c}
    \toprule
         &  far-patch & close-patch&over-patch\\
         \midrule
    one 32$\times$32 patch (1024 px)     & 27.3\%&22.4\%&8.7\%\\
    two 32$\times$32 patches (2048 px)  &27.3\% & 18.0\%&3.1\%\\ 
    two 24$\times$24 patches (1152 px) &27.3\%&18.6\%&3.1\%\\
    two 16$\times$16 patches (512 px)& 27.3\%&19.3\%&5.0\%\\
    \bottomrule
    \end{tabular}}

    \label{tab-multiple-patch}
\end{table}
\begin{figure*}[!t]
\centering
\begin{minipage}[b]{0.33\linewidth}
\includegraphics[width=\linewidth]{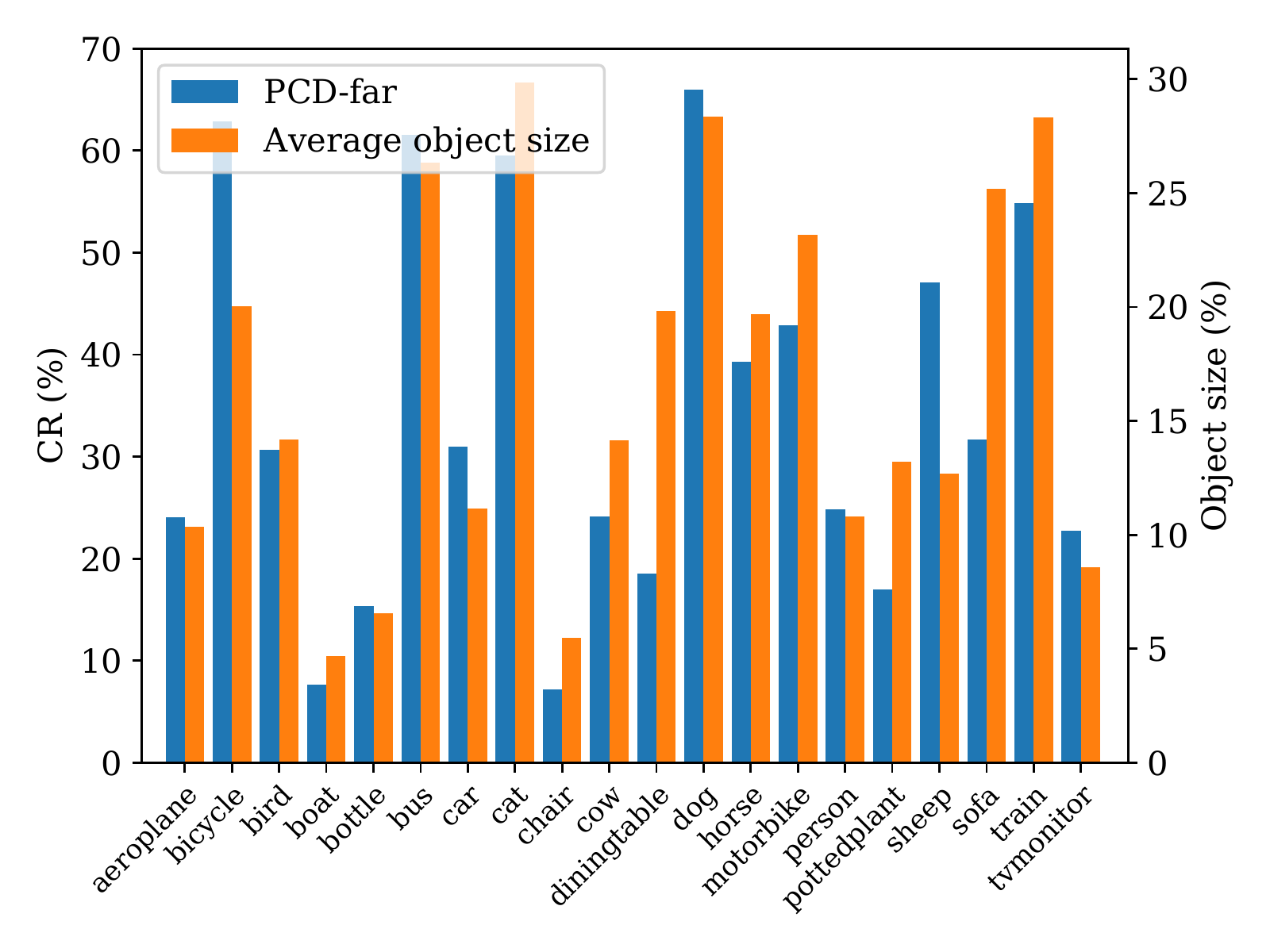}\\
\end{minipage}%
\begin{minipage}[b]{0.33\linewidth}
\includegraphics[width=\linewidth]{img/per_class_close.pdf}\\
\end{minipage}%
\begin{minipage}[b]{0.33\linewidth}
\includegraphics[width=\linewidth]{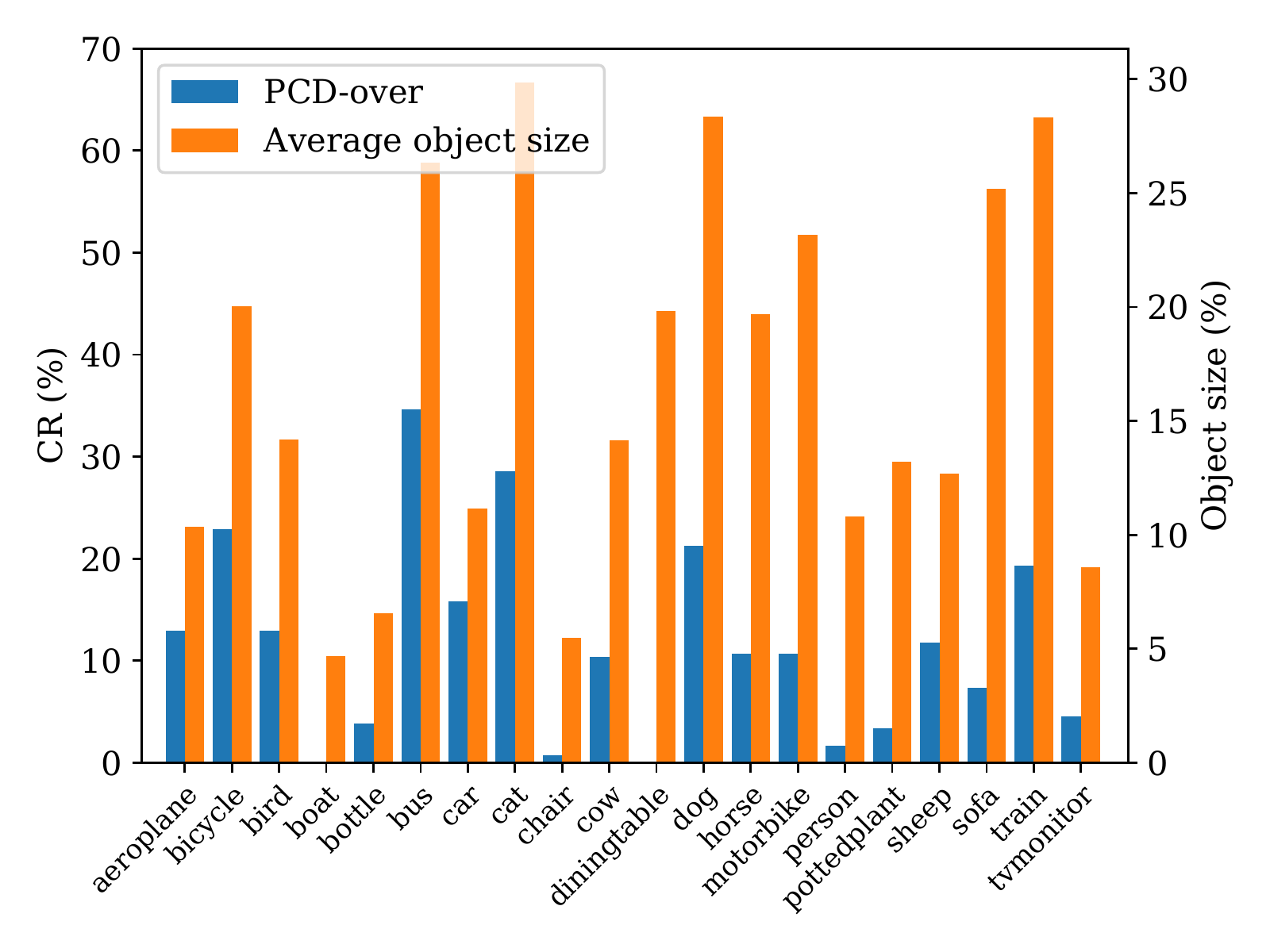}\\
\end{minipage}%
\caption{Per-class analysis of PASCAL VOC (left: far-patch; middle: close-patch; right: over-patch)}\label{fig-per-class-all}
\end{figure*}

\begin{figure}[!h]
    \centering
    \includegraphics[width=0.8\linewidth]{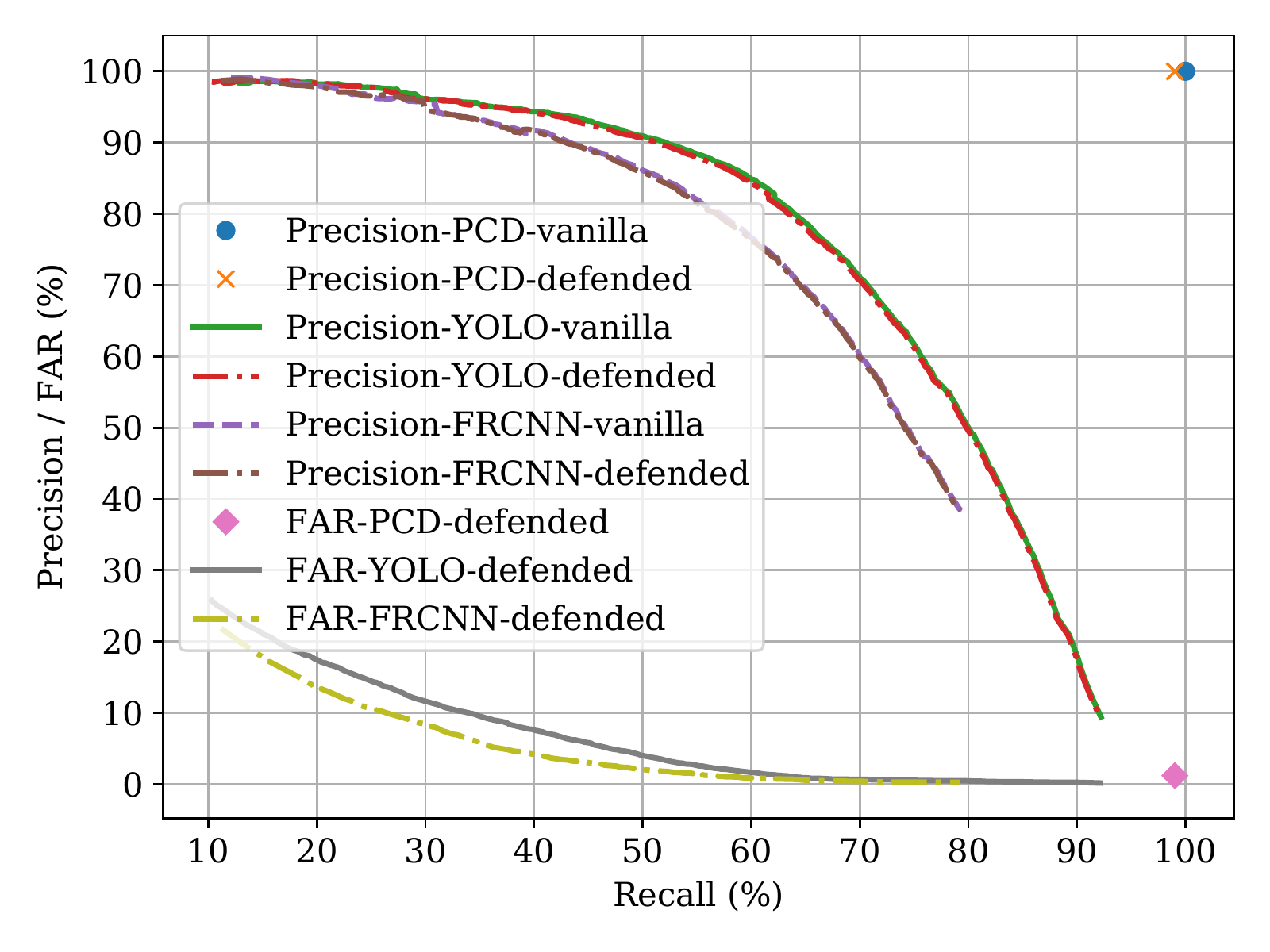}
    \caption{Clean performance of \framework on MS COCO}
    \label{fig-clean-coco}
\end{figure}

\begin{figure}[!h]
    \centering
    \includegraphics[width=0.8\linewidth]{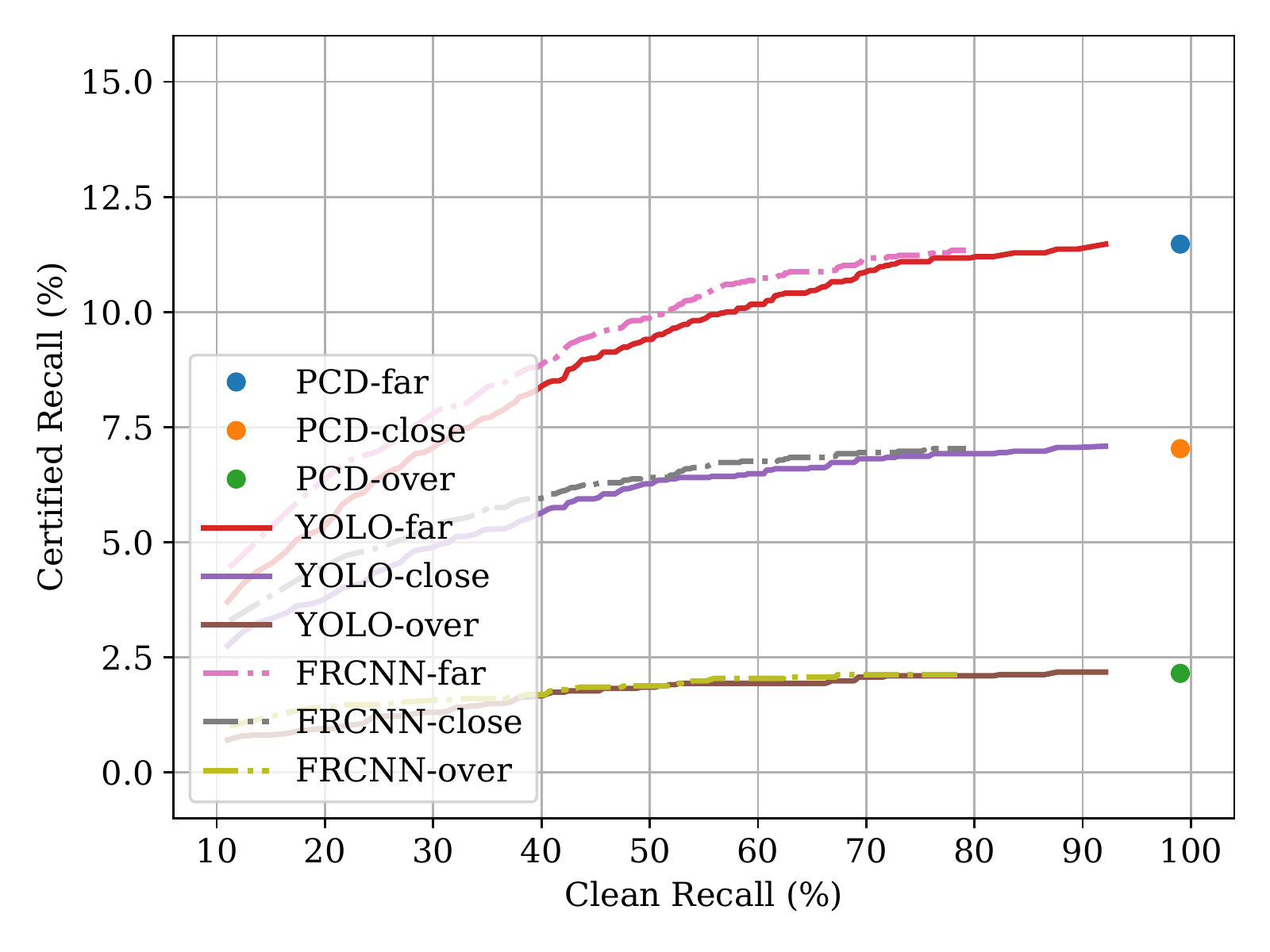}
    \caption{Provable robustness of \framework on MS COCO}
    \label{fig-pro-coco}
\end{figure}

\begin{figure}[!h]
    \centering
    \includegraphics[width=0.8\linewidth]{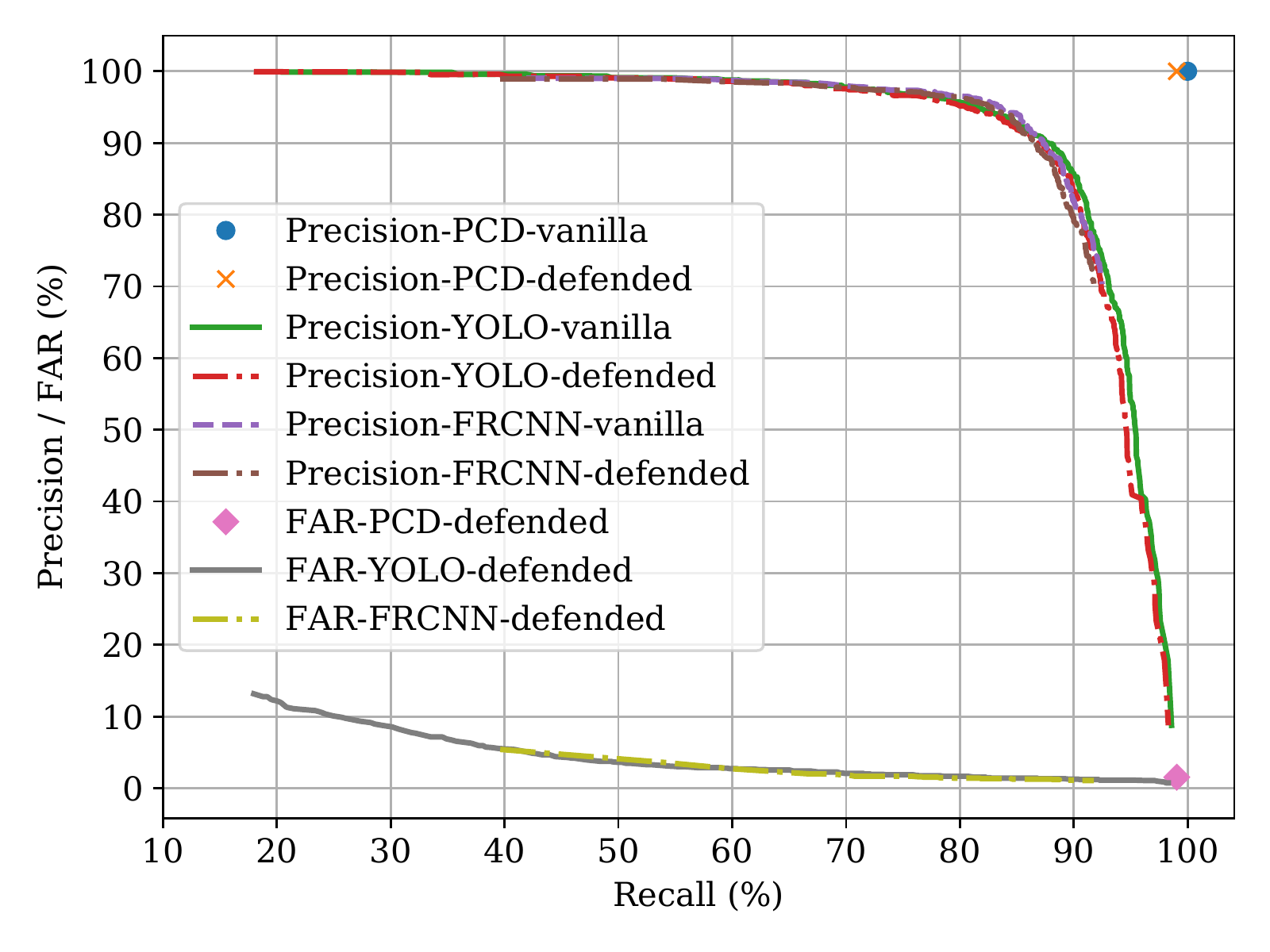}
    \caption{Clean performance of \framework on KITTI}
    \label{fig-clean-kitti}
\end{figure}

\begin{figure}[!h]
    \centering
    \includegraphics[width=0.8\linewidth]{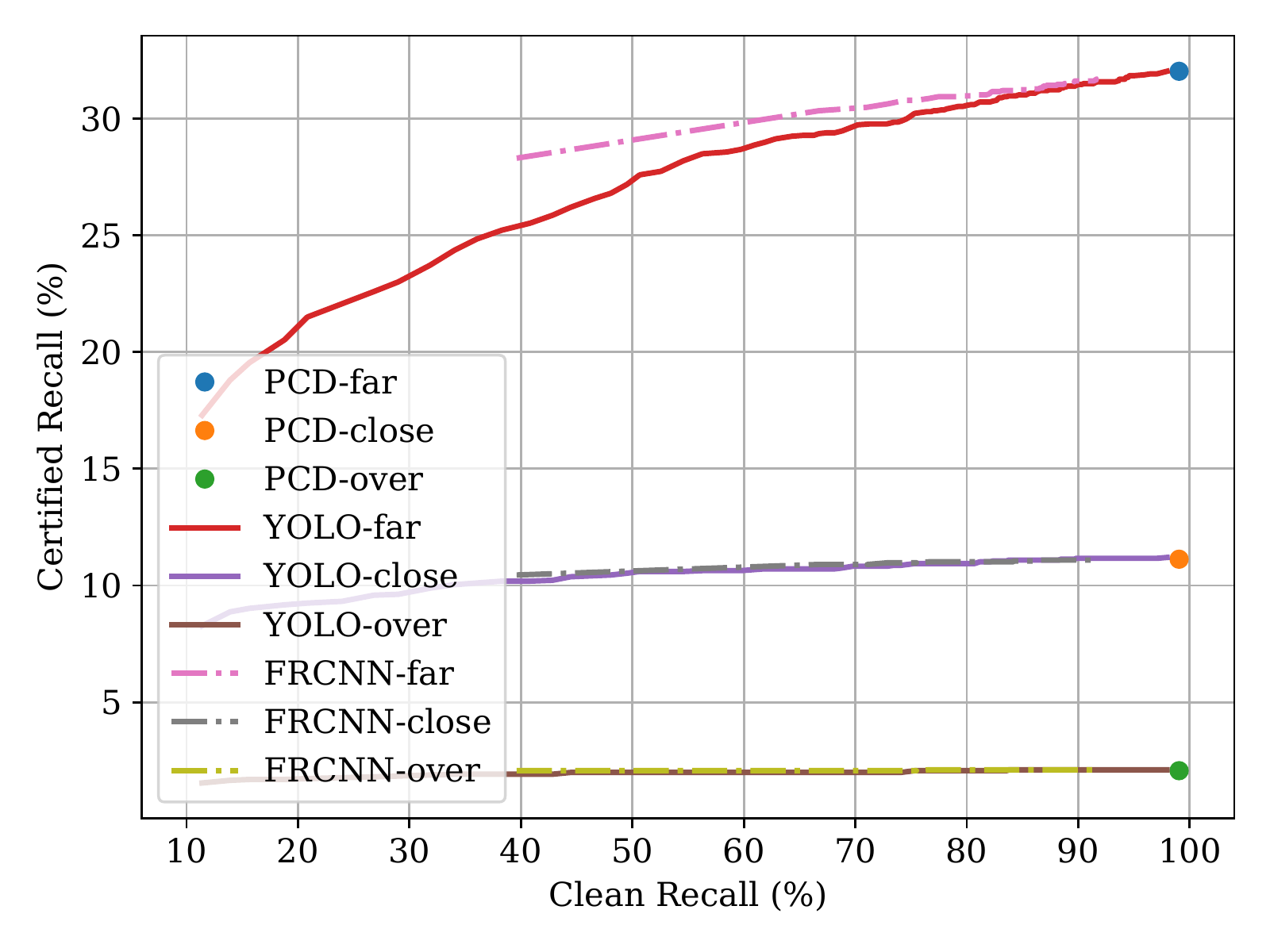}
    \caption{Provable robustness of \framework on KITTI}
    \label{fig-pro-kitti}
\end{figure}

\section{Experiment Results for Different Threat Models and Datasets}\label{apx-exp}
In this section, we include additional plots for per-class analysis as well as \framework's clean/provable performance on MS COCO and KITTI. The observation is similar to that in Section~\ref{sec-evaluation}.

\noindent\textbf{Per-class Analysis.} In Figure~\ref{fig-per-class-all}, we provide additional per-class analysis results. The observation is similar to Figure~\ref{fig-per-cls-voc-close} in Section~\ref{sec-evaluation}.

\noindent\textbf{Additional plots for MS COCO and KITTI.} We plot the clean performance and the provable robustness for MS COCO in Figure~\ref{fig-clean-coco} and Figure~\ref{fig-pro-coco}, and for KITTI in Figure~\ref{fig-clean-kitti} and Figure~\ref{fig-pro-kitti}. The observation is similar to that on PASCAL VOC (Figure~\ref{fig-clean-voc} and Figure~\ref{fig-pro-voc}).

\begin{figure*}[h]
    \centering
    \includegraphics[width=\linewidth]{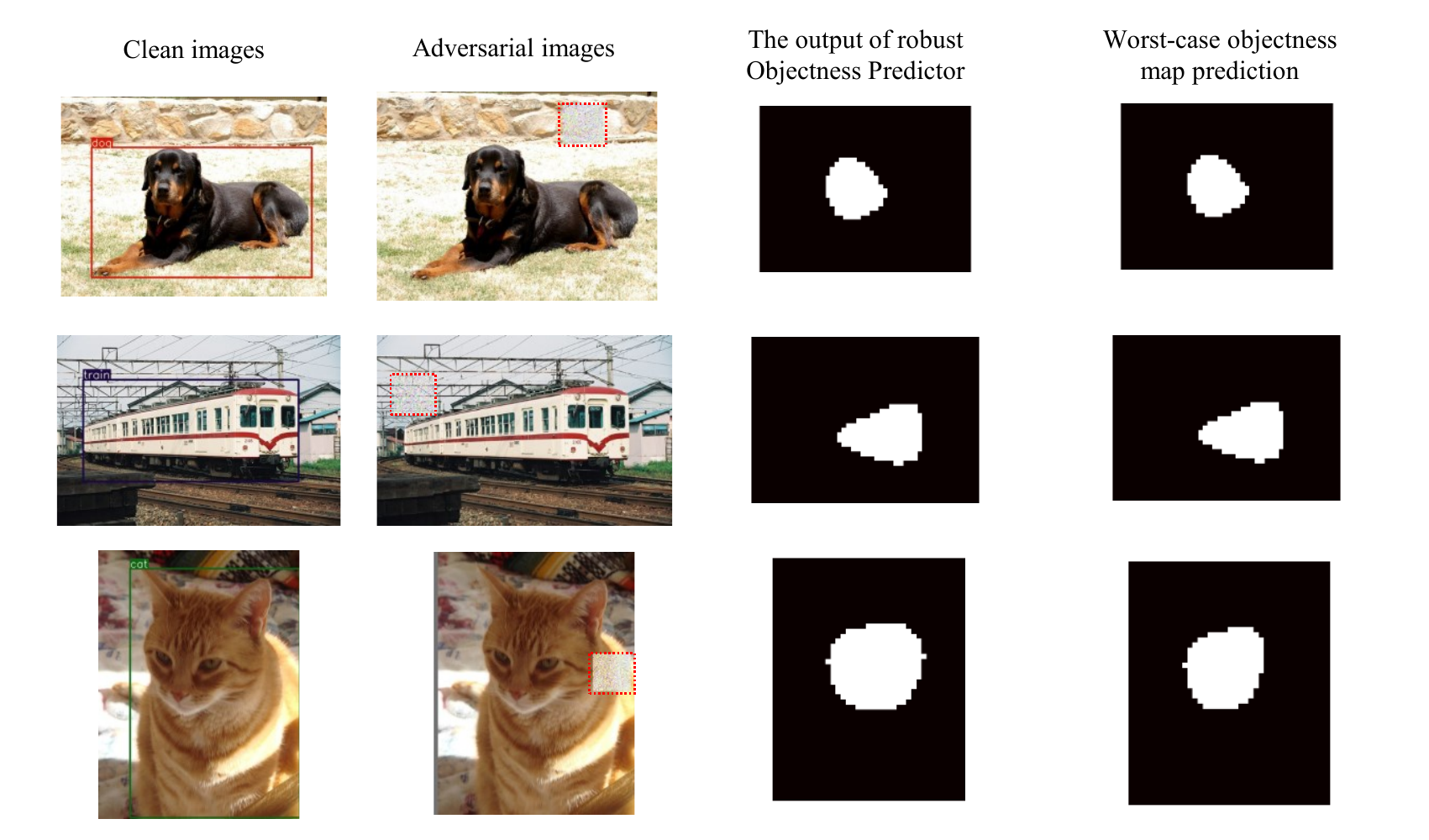}
    \caption{Visualization of \framework. \textmd{From left to right: 1) clean images -- conventional detectors correct detect all objects; 2) adversarial images with patches (marked with red dash boxes) -- conventional detectors miss all objects; 3) objectness map generated on these adversarial images -- \defense robustly predicts high objectness and eventually leads to an attack alert; 4) worst-case objectness map -- \framework can certify the provable robustness of these objects. Note that visualizations in this figure only consider one random patch location for each image, while results reported in Section~\ref{sec-evaluation} consider \textit{all possible locations and attack strategies} within the threat model.}}
    \label{fig-visu}
\end{figure*}

\section{Justification for Defense Objective}\label{apx-defense-obj}
In Section~\ref{sec-defense-formulation}, we allow \framework to \textit{only detect part of the object} or to trigger an attack alert on adversarial images. In this section, we discuss why this is a reasonable defense objective and how to extend \framework for a stronger notion of robustness.

\noindent \textbf{Partially detected bounding box.} We note that we allow the patch to be anywhere, even over the salient object. As a result, the patch likely covers a large portion of the object (visualization examples include the right part of Figure~\ref{fig-overview} and Figure~\ref{fig-patch-obj-visu}; see Appendix~\ref{apx-obj-size} for more details of object sizes and patch sizes). Therefore, it is reasonable to allow the model to output a smaller bounding box. If we consider the application scenario of  autonomous vehicles (AV), partially detecting a pedestrian or a car is already sufficient for an AV to make a correct decision.

Moreover, we can tune hyper-parameters such as binarizing threshold $T$ to increase the objectness in the output of \defense. More objectness will force the adversary to let \base predict a larger bounding box in order to reduce unexplained objectness that will otherwise lead to an attack alert. However, we note that more objectness also makes it more likely for \framework to trigger a false alert on clean images. This trade-off between robustness and clean performance should be carefully balanced.

\section{Additional Discussion on ``Free" Provable Robustness}\label{apx-why}

As shown in our evaluation, \framework achieves the first provable robustness for object detectors again patch hiding attacks at a negligible cost of clean performance. Intriguingly, we have demonstrated that we can use a module with limited clean performance (i.e., provably robust image classifier in \defense) to build a provably robust system with high clean performance (i.e., \framework). In this section, we provide additional discussion on this intriguing behavior.

One major difference between image classification and object detection is their type of error. For an image classifier, the only error is misclassification. In contrast, an object detector can have two types of errors, false-negative (FN; missing object) and false-positive (FP; predicting incorrect objects). Intriguingly, despite the difficulty to have a low FN and a low FP at the same time, it is easy to have a low FN (but with a potentially high FP) or a low FP (but with a potentially high FN). For example, if an object detector predicts all possible bounding boxes, the FN is zero (all possible boxes are retrieved, including the ground-truth box) but the FP is extremely high (most bounding boxes are incorrect). 
This intrinsic property of the object detection task allows us to achieve ``free" provable robustness.

Recall that in the clean setting, our design of prediction matching strategy (Section~\ref{sec-matcher}) enables \framework to tolerate FN of \defense (i.e., \textit{Clean Error 3}: predicts objects as background). Therefore, we can safely and easily optimize for a low FP (i.e., \textit{Clean Error 2}: predicts background as objects) to achieve a high clean performance.

\section{Pixel-space and Feature-space Windows}
\label{apx-mapping}
Recall that in Section~\ref{sec-obj-predictor}, we used a BagNet to extract a feature map for the entire image and perform robust window classification in the feature space. This design allows us to reuse the extracted feature map and reduce computational overhead. In this section, we discuss how to map the pixel-space bounding box to the feature space.

\noindent \textbf{Box mapping.} For each pixel-space box $(x_{\min},y_{\min},x_{\max},y_{\max})$, we calculate the feature-space coordinate $x_{\min}^\prime=\lfloor (x_{\min} - \texttt{r}+1)/\texttt{s} \rfloor,y_{\min}^\prime=\lfloor{(y_{\min} - \texttt{r}+1)/\texttt{s}}\rfloor,x_{\max}^\prime = \lfloor{x_{\max} /\texttt{s}}\rfloor,y_{\max}^\prime=\lfloor{y_{\max}/\texttt{s}}\rfloor$, where $\texttt{r},\texttt{s}$ are the size and stride of the receptive field size. The new feature-space coordinates indicate all features that are affected by the pixels within the pixel-space bounding box. We note that the mapping equation might be slightly different given different implementation of CNNs with small receptive fields. In our BagNet implementation, we have $\texttt{r}=33,\texttt{s}=8$.

\section{Visualization of \framework}
In this section, we give a simple visualization for \framework with YOLOv4 as \base (Figure~\ref{fig-visu}). To start with, we select three random images with larger objects and visualize the detection output of YOLOv4 in the first column. Second, we pick a random patch location on the image and perform an empirical patch hiding attack. The attack aims to optimize the pixel values within the adversarial patch to minimize the objectness confidence score of every possible bounding box prediction, which is a common strategy used in relevant literature~\cite{xu2020adversarial,wu2019making}. As shown in the second column, our patch attacks are successful, and YOLOv4 fails to detect any objects. Note that we use red dash boxes to illustrate the patch locations, and they are not the outputs of YOLOv4. Third, we feed this adversarial image to \defense, and we visualize the predicted objectness maps in the third column. As shown in the figure, although the adversarial patch makes YOLOv4 miss all objects, \defense still robustly outputs high objectness. As discussed in Section~\ref{sec-defense}, \framework will eventually issue an attack alert. Fourth, we reason about the worst-case objectness map prediction for these particular random patch locations used in the visualization, and plot the worst-case output in the fourth column. As shown in our visualization, \defense can still output high objectness in the worst case. Therefore, we can certify the robustness of \framework for this patch location.

Finally, we want to note that this appendix is merely a simple case study for an empirical patch attack at one single random location of each image. In contrast, robustness results reported in Section 5 are derived from Algorithm~\ref{alg-dpg-analysis} and Theorem~\ref{thm} holds \textit{for any possible patch attack strategy at any valid patch location.}

\end{document}